\newtheorem{Definition}{Definition}
\newtheorem{Lemma}{Lemma}
\newtheorem{Problem}{Problem}
\newtheorem{Algorithm}{Algorithm}
\newtheorem{Theorem}{Theorem}
\newtheorem{Remark}{Remark}
\newtheorem{Example}{Example}
\newtheorem{Assumption}{Assumption}
\newcommand{\refbrk}[1]{(\ref{#1})}
\title{\huge{Queue-Aware Distributive Resource Control for Delay-Sensitive Two-Hop MIMO Cooperative Systems}}
\author{Rui Wang, Vincent K. N. Lau and Ying Cui\\
Department of ECE,
The Hong Kong University of Science and Technology\\
Email:  wray@ust.hk, eeknlau@ust.hk, cuiying@ust.hk }
\today \hspace{20pt} }
\begin{document}

\maketitle

\begin{abstract}
In this paper, we consider a queue-aware distributive resource
control algorithm for two-hop MIMO cooperative systems. We shall
illustrate that relay buffering is an effective way to reduce the
intrinsic half-duplex penalty in cooperative systems. The complex
interactions of the queues at the source node and the relays are
modeled as an average-cost infinite horizon Markov Decision Process
(MDP). The traditional approach solving this MDP problem involves
centralized control with huge complexity. To obtain a distributive
and low complexity solution, we introduce a linear structure which
approximates the value function of the associated Bellman equation
by the sum of per-node value functions. We derive a distributive
{\em two-stage two-winner auction-based} control policy which is a
function of the local CSI and local QSI only. Furthermore, to
estimate the {\em best fit} approximation parameter, we propose a
distributive online stochastic learning algorithm using stochastic
approximation theory. Finally, we establish technical conditions for
almost-sure convergence and show that under heavy traffic, the
proposed low complexity distributive control is global optimal.
\end{abstract}


%
%
%
%
%

\section{Introduction}

Cooperative relay communication has been a hot research topic in
both the academia \cite{Meulen:68,Cover:79} and the industry
\cite{WiMaxRelay:site,WINNER:site} because it could exploit the
broadcast nature of wireless communication to achieve cooperative
diversity. 
\textcolor{black}{One potential issue of cooperative communication
is the half-duplex penalty in the relay nodes.} There have been some
recent works to address the half-duplex issue in cooperative relay
systems. For example, complex echo cancelation technique is used at
the relay to cancel the coupled interference from the transmitting
path \cite{Yip:87,Vega:08}. However, these works all focused at the
physical layer signal processing. In \cite{ErnestLo:07}, the authors
exploit special topology and proposed some relay protocols to get
rid of the half-duplex penalty. Moreover, this approach depends
heavily on the locations of the relays and it cannot be extended to
general relay channel. In this paper, we are interested to explore a
system level solution to deal with the half-duplex issue. We
consider a simple MIMO cooperative relay system with a multi-antenna
source node (Src), $M$ multi-antenna relay nodes (RS) and a
multi-antenna destination node (Dst). We shall illustrate that relay
buffering  can be utilized to significantly reduce the intrinsic
half-duplex penalty. Since buffering is involved, it is important to
consider not only the throughput performance but also the associated
end-to-end delay performance. As a result, we shall focus on
delay-optimal resource control for the two-hop protocol in MIMO
cooperative relay systems.

Delay-optimal resource control in cooperative relay system is a very
difficult problem. Most of the existing works have assumed infinite
backlogs of information and focus on optimizing the throughput
performance only.
A systematic approach is to model the delay-optimal control as
Markov Decision Process (MDP) \cite{Bertsekas:1987,Cao:2008}.
\textcolor{black}{However}, there is a well-known issue of the {\em
curse of dimensionality} and brute force value iteration or policy
iteration could not give simple implementable solutions\footnote{For
example, for a system with maximum buffer length of $20$, $3$ CSI
states and $M$ RSs, the total number of system states is
$20^{M+1}\times 3^{2M}$, which is unmanageable even for small number
of RS.}. For multi-hop systems, there is a unique challenge
concerning the complex interactions of buffers at the source node
and the $M$ RS nodes and the existing solutions for single-hop
systems cannot be extended easily to deal with this situation. There
are a few recent works that considered queue dynamics in relay
systems \cite{Yeh:05,Georgiadis-Neely-Tassiulas:2006}. However,
these works have focused on the characterization of the {\em
stability region} and throughput optimal control. The question of
delay-optimal control for cooperative relay system remains to be
open. In addition, another important technical challenge is the
distributive implementation consideration. For instance, the entire
system state could be characterized by the {\em global CSI} (CSI
among every pair of nodes in the system) as well as the {\em global
QSI} (QSI of every buffer in the system).
\textcolor{black}{Brute-force solution of the MDP will yield a
control policy that is} adaptive to the global CSI and global QSI.
This poses a huge implementation challenges because these global
system state information are distributed
locally at each of the source and relay nodes. 

In this paper, we shall address the above challenges as follows. We
shall first formulate the delay-optimal resource control policy
(such as the power control and RS selection) as an average-cost
infinite horizon Markov Decision Process (MDP). To alleviate the
{\em curse of dimensionality}, and to obtain a distributive and low
complexity solution, we first introduce a {\em per-node value
function} to approximate the value function of the associated
Bellman equation. Based on the per-node value function, we derive a
distributive {\em two-stage two-winner auction-based} control
policy, which is a function of the local CSI and local QSI. The
per-node value function is obtained via a distributive online
stochastic learning algorithm, which requires local CSI and local
QSI only. The proposed online stochastic learning is quite different
from the conventional reinforced learning \cite{Abounadi:98} in
mainly two ways: (1) We are dealing with {\em constrained MDP}
(CMDP) and our online iterative solution updates both the value
function and the Lagrange multipliers (LM) simultaneously; (2) The
control action is determined from the per-node value function of all
the nodes via a per-slot auction  mechanism. Therefore, the
algorithm dynamics of the per-node online learning is not a {\em
contraction mapping} and hence, standard convergence proof using
fixed point theorem cannot be applied in our case directly. Using
the technique of separation of different time scales, we establish
technical conditions for the almost sure convergence of the proposed
distributive stochastic learning. We also show that the proposed low
complexity distributive solution is asymptotically global optimal
under heavy traffic loading. Finally, we demonstrate by simulation
that the proposed scheme has significant performance gain over
various baselines (such as conventional {\em CSIT-only} control and
the {\em throughput-optimal control} (in stability sense)) with low
complexity $\mathcal{O}(M)$ and low signaling overhead.

\section{System Models} \label{sec:model}

\subsection{System Architecture and MIMO Relay Physical Layer Model}

\begin{figure}
\centering
\includegraphics[height=4.5cm, width=11cm]{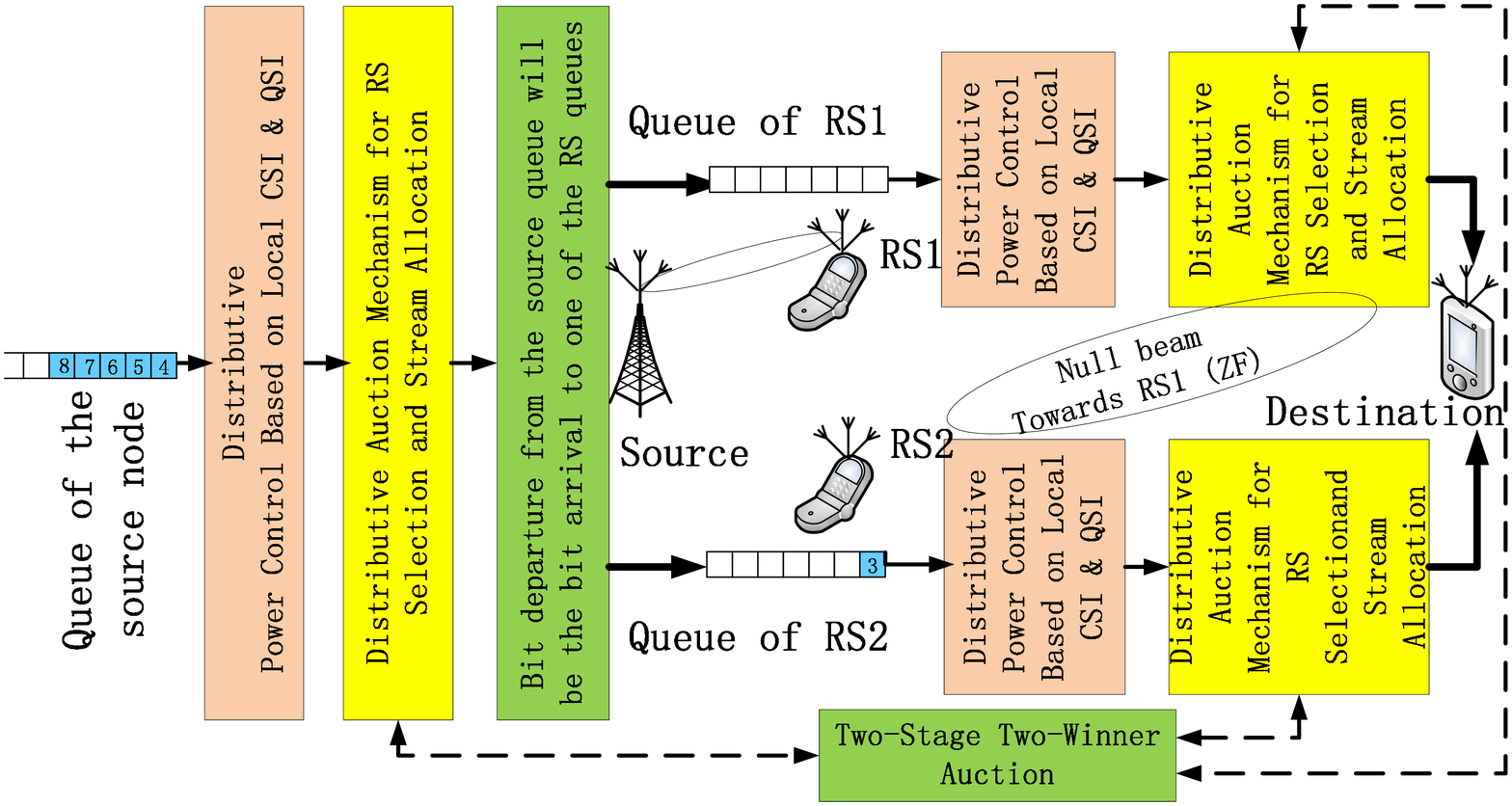}
\caption{Illustration of the two-hop MIMO cooperative system with a
multi-antenna source node, 2 multi-antenna RS nodes and a
multi-antenna destination node. By exploiting buffers at the 2 MIMO
RSs, the S-R link (source node to RS1) and R-D link (RS2 to
destination node) can deliver packets simultaneously
\textcolor{black}{without interfering with each other using signal
processing techniques (with appropriate precoder and decorrelator
designs)}. Thus,
by exploiting relay buffering, we could substantially reduce the intrinsic penalty associated with half duplex relays. 
}
\label{Fig:system}
\end{figure}

We consider a two-hop multi-antenna cooperative relay communication
system with one multi-antenna source node ($N_T$ antennas), $M$
multi-antenna half-duplex relay stations (RS, each with $N_R$
antennas) and one multi-antenna destination node ($N_T$ antennas),
as illustrated in Fig. \ref{Fig:system}. The source node cannot
deliver packets directly to the destination node due to limited
coverage and the cooperative RSs are deployed to extend the source
node's coverage.



Denote the Rx-RS and the Tx-RS as the $m$-th RS and the $n$-th RS
for notation simplicity\footnote{Since the RSs are half-duplex under
practical consideration, we require $m\neq n$ implicitly.}. Let
$N_{SR}$ and $N_{RD}$ be the number of data streams transmitted in
the S-R link and the \textcolor{black}{R-D} link respectively, where
we require $N_{RD}=\min(N_T,N_R-N_{SR})$ for simultaneous
interference-free transmission. We shall illustrate the signal model
of the S-R$_m$ link and the R$_n$-D link as follows:
\begin{itemize}
\item \textbf{S-R$_m$ link}: Let $\mathbf{X}_S \in C^{N_{SR} \times 1}$ and $\mathbf{F}_S\in
C^{N_T \times N_{SR}}$ be the symbol vector and the precoder matrix
of the source node respectively,
 $\mathbf{G}_m \in C^{N_{SR} \times N_R}$ be the decorrelator matrix at the
$m$-th RS node, the $N_{SR} \times 1$ post-processing symbol vector
at the $m$-th RS is given by $\mathbf{Y}_m = \mathbf{G}_m
\mathbf{H}_{S,m} \mathbf{F}_S \mathbf{X}_S + \mathbf{Z}_{S,m}$,
where $\mathbf{H}_{S,m} \in C^{N_R \times N_T}$ is the zero-mean
unit variance i.i.d. complex Gaussian fading matrix from the source
node to the $m$-th RS, $\mathbf{Z}_{S,m}\in C^{N_{SR} \times 1}$ is
the zero-mean unit variance complex Gaussian channel noise.

\item \textbf{R$_n$-D link}: Let $\mathbf{X}_n \in C^{N_{RD}\times 1}$ and
$\mathbf{F}_n \in C^{\textcolor{black}{N_R} \times N_{RD}}$ be the
transmit symbol vector and the precoder of the $n$-th RS
respectively, the $\textcolor{black}{N_T} \times 1$ received symbol
vector at the destination node is given by\footnote{Due to the
limited coverage of the source node, we assume the received signal
from the source node is negligible compared with the received signal
from the relay node.} $\mathbf{Y}_D =  \mathbf{H}_{n,D} \mathbf{F}_n
\mathbf{X}_n + \mathbf{Z}_{n,D}$,
where $\mathbf{H}_{n,D} \in C^{N_T \times N_R}$ is complex Gaussian
fading matrix from the $n$-th RS to the destination node,
$\mathbf{Z}_{n,D} \in C^{N_T \times 1}$ is the complex Gaussian
channel noise.
\end{itemize}

In this paper, the resource control is performed distributively on
each RS and therefore, we define the local channel state information
(CSI) available at each RS as follows. \textcolor{black}{For the
$m$-th RS, there are two types of {\em local CSI}, namely the {\em
type-I local CSI} and {\em type-II local CSI} as illustrated in Fig.
\ref{Fig:frame}. The type-I and type-II local CSI of the $m$-th RS
are denoted by $\mathbf H_m^I= \{\mathbf{H}_{S,m} \}$ and $\mathbf
H_m^{II} =\{\mathbf{H}_{m,D}\}\cup \{\mathbf H_{m,n}| n\neq m, 1\leq
n \leq M\}$, respectively. For notation convenience, let
$\mathbf{H}_m = \mathbf H_m^I\cup\mathbf H_m^{II}$ be the local
CSI\footnote{\textcolor{black}{Note that both the type-I and type-II
local CSI  at the $m$-th RS refers to all the outgoing links from
the m-th RS and hence, they can be measured at the m-th RS using
channel reciprocity and preambles. For example, there are standard
signaling and channel sounding mechanisms in the WiMAX (802.16j,
802.16m) and LTE systems for the RS to acquire the local CSI.}} at
the $m$-th RS and $\mathbf{H} =\cup_{m=1}^M \ \mathbf{H}_m$ be the
global CSI (GCSI) of the system.}
Moreover, the assumption on the channel is summarized below:
\begin{Assumption}[Assumption on Channel Fading]\textit{
We assume the channel fading elements in the global CSI $\mathbf{H}$
are i.i.d. $\mathcal{CN}(0,1)$. The CSI is quasi-static within a
frame but i.i.d. between frames.} \label{asm:channel}
\end{Assumption}

\textcolor{black}{We} assume strong channel coding is used and
hence, the maximum achievable data rate is given by the
instantaneous mutual information\footnote{For example, LDPC with
reasonably large block length (e.g 8kbyte) can achieve the
instantaneous mutual information within 0.5dB SNR
\cite{Richardson:01}.}. If the source node transmits $R_{S,m}$
information bits to the $m$-th RS in the current frame, the frame
will be successfully received if $ R_{S,m} \leq \tau \log_2 \det
\big[ \mathbf{I}+ \mathbf{G}_m \mathbf{H}_{S,m} \mathbf{F}_S
\mathbf{F}_S^{\dag} \mathbf{H}_{S,m}^{\dag} \mathbf{G}_m^{\dag}\big]
$, where $^{\dag}$ denotes the matrix conjugate transpose  and
$\tau$ is the frame duration. Similarly, the destination node could
successfully decode a frame with $R_{n,D}$ information bits
(transmitted from the $n$-th RS) if $ R_{n,D} \leq \tau \log_2 \det
\big[ \mathbf{I}+ \mathbf{H}_{n,D} \mathbf{F}_n \mathbf{F}_n^{\dag}
\mathbf{H}_{n,D}^{\dag} \big]$.

\subsection{\textcolor{black}{Buffered Decode and
Forward}}\label{subsec:BDF} \textcolor{black}{Although the RS nodes
are half-duplex relays\footnote{Half-duplex relay means that the RS
nodes do not have any Tx/Rx echo-cancelation capability.}, it is
possible to reduce the system half-duplex penalty by exploiting
buffers at the half-duplex RSs. Specifically, the source node could
transmit a packet to the $m$-th RS (denoted as the Rx-RS) and at the
same time, the $n$-th RS (denoted as the Tx-RS) transmits its {\em
buffered packet} to the destination node without interfering the
Rx-RS. This is possible by means of precoder-decorrelator designs at
the source node, Rx-RS ($m$-th RS) and the Tx-RS ($n$-th RS).} Let
$p_{S,m}$ and $p_{n,D}$ denote the total transmit power at the
source node for the S-R$_m$ link and the Tx-RS
\textcolor{black}{($n$-th RS)} for the
R$_n$-D link, respectively. 
For any given $N_{SR}$, $p_{S,m}$ for the S-R$_m$ link
\textcolor{black}{as well as} $N_{RD}$, $p_{n,D}$ for the R$_n$-D
link (where $N_{RD}=\min(N_T,N_R-N_{SR})$ implicitly), the
decorrelator and precoder designs are elaborated below.
\begin{itemize}
\item {\bf Precoder and Decorrelator Design of the S-R$_m$ Link \textcolor{black}{at the Rx-RS Node}}\footnote{\textcolor{black}{Type-I local CSI $\mathbf H^I_m$ is required at the $m$-th Rx-RS node to compute the precoder and decorrelator of the S-R$_m$ link.}}: The precoder at the source node ($\mathbf{F}_S$) and the
decorrelator at the Rx-RS node ($\mathbf{G}_m$) are chosen to
optimize the mutual information of the S-R$_m$ link subject to the
transmit power constraint as follows:
\begin{eqnarray}
\{\mathbf{G}_m^*(N_{SR}),\mathbf{F}_S^{*}(N_{SR}, p_{S,m})\} & = &
\arg \max_{\mathbf{F}_S, \mathbf{G}_m} \log_2 \det \Big[ \mathbf{I}
+ \mathbf{G}_m(N_{SR}) \mathbf{H}_{S,m} \mathbf{F}_S
\mathbf{F}_S^{\dag}
\mathbf{H}_{S,m}^{\dag} \mathbf{G}_m^{\dag}(N_{SR}) \Big] \nonumber\\
&s.t.& tr(\mathbf{F}_S \mathbf{F}_S^{\dag}) = p_{S,m} \quad
\mbox{(Transmit power constraint)} \label{eqn:s-r-power}.
\end{eqnarray}
Let $\mathbf{H}_{S,m} = \mathbf{U}_{S,m} \mathbf{\Sigma}_{S,m}
\mathbf{V}_{S,m}^{\dag}$ be the SVD decomposition of channel matrix
$\mathbf{H}_{S,m}$, where the singular values in
$\mathbf{\Sigma}_{S,m}$ are sorted in a decreasing order along the
diagonal,
$\mathbf{U}_{S,m}=[\mathbf{u}_{S,m}^1,...,\mathbf{u}_{S,m}^{N_R}]$
and
$\mathbf{V}_{S,m}=[\mathbf{v}_{S,m}^1,...,\mathbf{v}_{S,m}^{N_T}]$.
Using standard optimization techniques \cite{Boyd:04}, the source
precoder $\mathbf{F}_S^*$ is given by
\begin{equation}
\mathbf{F}_S^{*}(N_{SR}, p_{S,m}) =
[\mathbf{v}_{S,m}^1,...,\mathbf{v}_{S,m}^{N_{SR}}] \times diag
\Big\{\frac{1}{\lambda_{S,m}}-\frac{1}{\eta_{S,m}^1},...,\frac{1}{\lambda_{S,m}}-\frac{1}{\eta_{S,m}^{N_{SR}}}\Big\}
\label{eqn:wf-s},
\end{equation}
where $\eta_{S,m}^1\geq\eta_{S,m}^2 ...\geq \eta_{S,m}^{N_{SR}}$ are
the first $N_{SR}$ singular values of channel matrix
$\mathbf{H}_{S,m}$, $\lambda_{S,m}$ is the Lagrange multiplier
corresponding to the transmit power constraint in
(\ref{eqn:s-r-power}). The decorrelator $\mathbf{G}_m^*$ is given by
\begin{equation}
\mathbf{G}_m^*(N_{SR}) =
[\mathbf{u}_{S,m}^1,...,\mathbf{u}_{S,m}^{N_{SR}}]^{\dag}
\label{eqn:decorrelator}.
\end{equation}

\item {\bf Precoder Design of \textcolor{black}{the R$_n$-D Link} at the Tx-RS Node}\footnote{\textcolor{black}{Type-II local CSI $\mathbf H^{II}_n$ is required at the $n$-th Tx-RS node to compute the precoder of the R$_n$-D link.}}: Similarly, given the decorrelator $\mathbf{G}_m^*$ in (\ref{eqn:decorrelator}), the precoder at the Tx-RS node $\mathbf{F}_n \in C^{N_{RD} \times N_R}$
is selected to maximize R-D link mutual information subject to the
transmit power constraint and the interference nulling constraint
(at the Rx-RS node) as follows
:
\begin{eqnarray}
\mathbf{F}_n^{*}(N_{RD}, p_{n,m}) & = & \arg \max_{\mathbf{F}_n}
\log_2 \det \Big[ \mathbf{I} + \mathbf{H}_{n,D} \mathbf{F}_n
\mathbf{F}_n^{\dag}
\mathbf{H}_{n,D}^{\dag} \Big] \nonumber\\
&s.t.& \mathbf{G}_m^* (N_{SR}) \mathbf{H}_{n,m} \mathbf{F}_n = 0 \quad \mbox{(Interference nulling constraint)} \label{eqn:null}\\
&& tr(\mathbf{F}_n \mathbf{F}_n^{\dag}) = p_{n,D} \quad
\mbox{(Transmit power constraint)}\label{eqn:r-d-power}
\end{eqnarray}
The interference nulling constraint in (\ref{eqn:null}) is to allow
simultaneously active R-D and S-R links using half-duplex RSs. Let
$\mathbf H_{n,D} \times null(\mathbf{G}_m \mathbf{H}_{n,m}) =
\mathbf{U}_{n,D} \mathbf{\Sigma}_{n,D} \mathbf{V}_{n,D}^{\dag}$ be
the SVD decomposition, where the singular values in
$\mathbf{\Sigma}_{n,D}$ are sorted in a decreasing order along the
diagonal, $null(\mathbf{G}_m \mathbf{H}_{n,m})$ denotes the null
space of matrix $\mathbf{G}_m \mathbf{H}_{n,m}$ and
$\mathbf{V}_{n,D}=[\mathbf{v}_{n,D}^1,...,\mathbf{v}_{n,D}^{N_R-N_{SR}}]$.
Using standard optimization techniques \cite{Boyd:04}, the precoder
at the Tx-RS ($\mathbf{F}_n^*$) is given by:
\begin{equation}
\mathbf{F}_n^{*} (N_{RD}, p_{n,m}) =
[\mathbf{v}_{S,n}^1,...,\mathbf{v}_{S,n}^{N_{RD}}] \times diag
\Big\{\frac{1}{\lambda_{n,D}}-\frac{1}{\eta_{n,D}^1},...,\frac{1}{\lambda_{n,D}}-\frac{1}{\eta_{n,D}^{N_{RD}}}\Big\}
\label{eqn:wf-r},
\end{equation}
where $\eta_{n,D}^1\geq \eta_{n,D}^2...\geq \eta_{S,m}^{N_{RD}}$ are
the first $N_{RD}$ singular values of channel matrix $\mathbf
H_{n,D} \times null(\mathbf{G}_m \mathbf{H}_{n,m})$, $\lambda_{n,D}$
is the Lagrange multiplier corresponding to the power constraint in
(\ref{eqn:r-d-power}).
\end{itemize}

\subsection{Bursty Source Model and  Queue Dynamics}
There is one queue in the source node and one queue in each of the
$M$ RSs respectively for the storage of received information bits.
Let $N_Q$ be the maximum buffer size (number of bits) for the
buffers in the source node and all the RSs. Let $X(t)$ indicates the
\textcolor{black}{number of} new information bits arrival in the
$t$-th frame at the source node. The assumption on the bit arrival
process is given below:
\begin{Assumption}[Assumption on Arrival Process]\textit{
We assume $X(t)$ is i.i.d. over frames based on a general
distribution $f_X(x)$ \textcolor{black}{with $\mathbb E[X(t)] =
\lambda_S$} and the information bits \textcolor{black}{arrive} at
the end of each frame.} \label{asm:arrival}
\end{Assumption}
Moreover, let $Q_S(t)$ and $Q_m(t)$ denote the number of information
bits in the source node's queue and  the $m$-th RS's  queue ($1\leq
m \leq M$) at frame $t$. We assume each RS has the knowledge of its
own queue length and the source node's queue length. Thus, the local
QSI of the $m$-th RS is $\big(Q_S(t),Q_m(t)\big)$.
$\mathbf{Q}(t)=\big(Q_S(t), Q_1(t), \cdots, Q_M(t)\big)$ denotes the
\textit{global queue state information} (GQSI) at frame $t$.

The overall system queue dynamics at the source node and the RSs are
summarized below.
\begin{itemize}
\item If the source node successfully delivers $R_{S,m}(t)$ information bits to the $m$-th
RS at frame $t$, then $Q_S(t+1) = \min \left\{\max\{Q_S(t)-
R_{S,m}(t),0\} + X(t), N_Q \right\}$ and $Q_m(t+1) = \min
\left\{Q_m(t) + R_{S,m}(t), N_Q \right\}$.

\item If the source node fails to deliver any information bit to the RSs , then $
Q_S(t+1) = \min \left\{Q_S(t) + X(t), N_Q \right\}$.

\item If the $n$-th RS successfully delivers $R_{n,D}(t)$ information bits to the
destination at frame $t$, then $Q_n(t+1) = \max\{Q_n(t) -
R_{n,D}(t), 0\}$.
\end{itemize}

\begin{Remark}
Each information bit delivered from the source node will be received
by one of the RSs and different RSs may have different information
bits in the buffer. When the source node is to deliver information
bits to one RS, selecting different RSs with different buffer
lengths may have different effects on the average packet delay of
the system. Therefore, not only the CSI of all S-R links but also
the QSI of all RSs should be considered in directing the source
node's  transmission. Such coupling on the system QSI is a unique
challenge in delay-optimal control of multi-hop systems. Fig.
\ref{Fig:system} shows the top level architecture illustrating the
interactions among all the queues in the two-hop cooperative system.
\end{Remark}

\subsection{Distributive Contention Protocol}\label{subsec:model-prot}


\textcolor{black}{Based on the BDF in Section \ref{subsec:BDF}, we
still need to determine (a) which RS should be the Rx-RS ($m^*$),
(b) which RS should be the Tx-RS ($n^*$) and (c) the number of data
streams transmitted by the source node ($N_{SR}^*$) and the Tx-RS
($N_{RD}^*$). Due to the decentralized control requirement, we shall
propose a {\em two-stage two-winner auction} mechanism for
distributive\footnote{\textcolor{black}{Similar to the common notion
of distributive algorithms in the literature
\cite{HanZhu:07,Palomar:07}, the term ``distributive'' in this paper
refers to algorithms that perform computation locally but require
explicit message passing. Yet, the message passing overhead in the
bidding process is quite
mild\cite{Huangjianwei:06,CurescuBidding:08}.}} contention
resolution.}

\begin{figure}
\centering
\includegraphics[height=8cm, width=11cm]{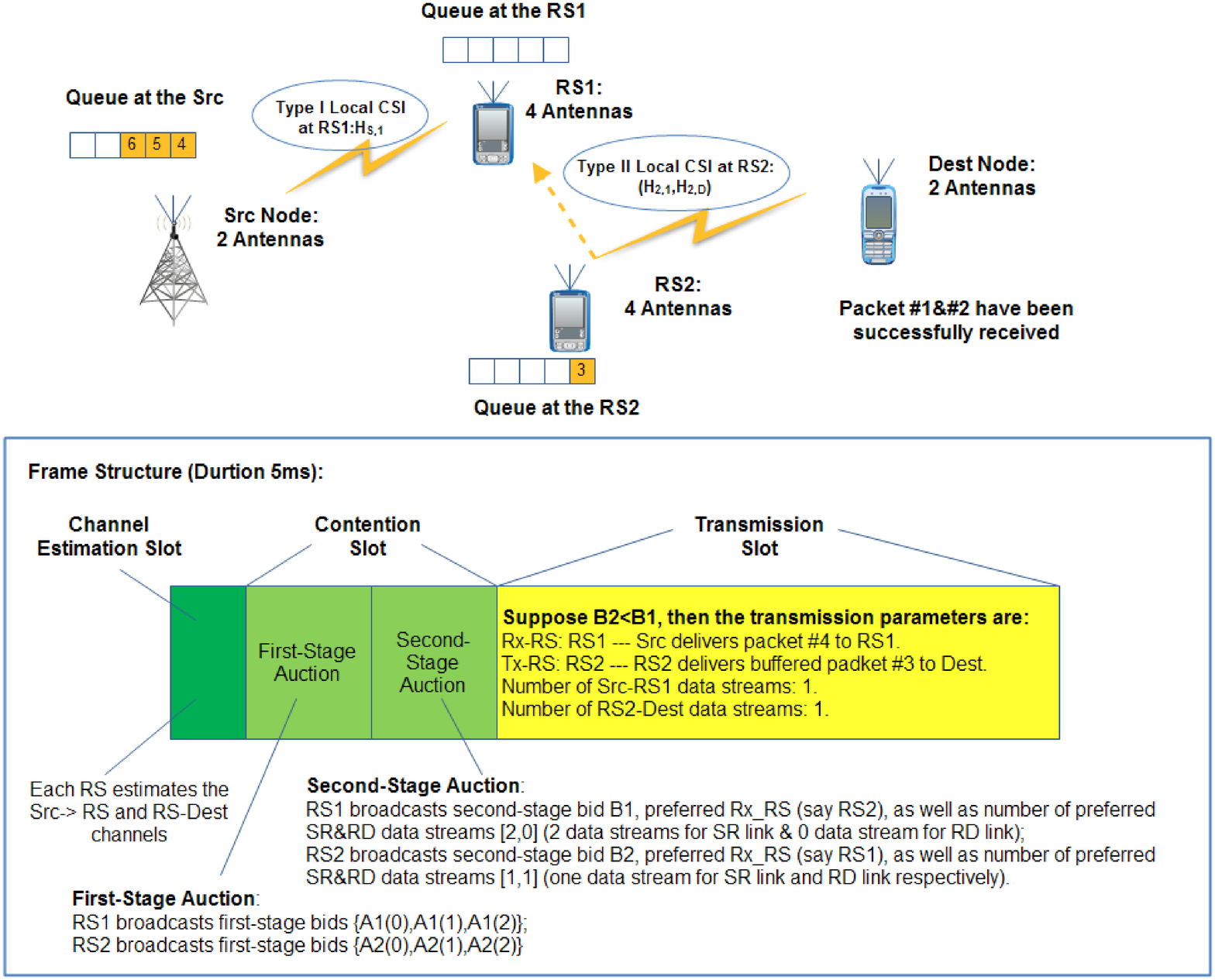}
\caption{\textcolor{black}{Illustration of an example of bidding
protocol for a 2-RS system. } } \label{Fig:frame}
\end{figure}

\textcolor{black}{Figure \ref{Fig:frame} illustrates an example of
bidding protocol for a 2-RS system.} As a result, the RS selection
and data stream allocation procedure can be parameterized by a
bidding vector
$\big\{\big(\textcolor{black}{A_m(0),...,A_m(\min(N_T,N_R))}\big),$
$B_{m})|\forall m\big\}$. We shall refer the bidding vector as the
{\em RS selection and data stream allocation policy} in the rest of
the paper.


%

\subsection{Optimization Objective and Control Policy}

\begin{Definition}[Distributive Stationary Control Policy]
\textit{A distributive stationary control policy $\Pi=\{\Pi^m|1 \leq
m \leq M\}$ \textcolor{black}{is a collection of stationary control
policies $\Pi^m$ at the $m$-th RS, where}
$\Pi^m=\{\Pi^m_p,\Pi^m_A,\Pi^m_{B}\}$ includes
\textcolor{black}{the} power allocation policy of S-R link and R-D
link $\Pi_p^m$, the first-stage and second-stage bidding policy
$\Pi_A^m$ and $\Pi_B^m$. Specifically,
\begin{align}
&\Pi^m_p(S_m)=\Big\{p_{S,m}(N_{SR}),p_{m,D}(N_{RD}):N_{SR},N_{RD}=0,1,\cdots,\min(N_T,N_R)\Big\}\triangleq\mathbf p_m\label{eqn:policy-p}\\
&\Pi^m_A(S_m)=\Big\{A_m(N_{SR})|N_{SR}=0,1,...,\min(N_T,N_R)\Big\}\triangleq\mathbf
A_m\label{eqn:policy-A}\\
&\Pi^m_B (S_m,\cup_{m'=1}^M \mathbf A_{m'})= \Big\{B_m,
I_m, (N_{SR,m},N_{RD,m})|N_{RD}=\min(N_T,N_R-N_{SR})\Big\}\triangleq\mathbf B_m\label{eqn:policy-B}
\end{align}
for $m=1,2,...,M$, where $p_{S,m}(N_{SR})$ is the total transmit
power allocation at the source node for the S-R link with $N_{SR}$
data streams, $p_{m,D}(N_{RD})$ is the total transmit power
allocation at the Tx-RS for the R-D link with $N_{RD}$ data
streams.
} \label{def:policy}
\end{Definition}

\textcolor{black}{Denote the local system \textcolor{black}{state}
of the $m$-th RS as $S_m=(Q_S,Q_m,\mathbf{H}_m)$ ($1\leq m \leq
M$). Therefore, the global system state is given by $\mathbf
S=\cup_{m=1}^M \ S_m= (\mathbf{Q},\mathbf{H})$.}

\begin{Remark} [Distributive Consideration of Stationary Control Policy $\Pi$ in Definition
\ref{def:policy}] The stationary control policy $\Pi=\{\Pi^m|1 \leq
m \leq M\}$ is distributive 
in the sense that the policy $\Pi^m$ at each RS $m$ only depends on
the local system state $\textcolor{black}{S_m}$ and the broadcast
bidding information available at RS $m$. Thus, for notation
simplicity, we shall omit the bidding \textcolor{black}{information}
when the meaning is clear, i.e. we shall use
$\Pi^m(\textcolor{black}{S_m})=\{\Pi^m_p(\textcolor{black}{S_m}),\Pi^m_A(\textcolor{black}{S_m}),\Pi^m_{B}(\textcolor{black}{S_m})\}$
in the rest of
the paper. 
\end{Remark}

A stationary control policy $\Pi$ induces a joint distribution for
the random process $\{\mathbf S(t)\}$. Under Assumption
\ref{asm:channel} and \ref{asm:arrival}, $\mathbf S(t+1)$ only
depends on $\mathbf S(t)$ and actions at frame $t$, and hence the
induced random process $\{\mathbf S(t)\}$ for a given control policy
$\Pi$ is Markovian with the following transition probability:
\begin{align}
\Pr \big[\mathbf S(t+1)\big| \mathbf S(t), \Pi\big(\mathbf S(t)\big)
\big]=\Pr\big[\mathbf{H}(t+1)\big]\Pr \big[\mathbf{Q}(t+1)\big|
\mathbf S(t), \Pi\big(\mathbf S(t)\big)\big]\label{eqn:kernel},
\end{align}
where the equality is because of Assumption \ref{asm:channel} and
the queue dynamics transition probability $\Pr
\big[\mathbf{Q}(t+1)\big| \mathbf S(t), \Pi\big(\mathbf
S(t)\big)\big]$ is given by
\begin{align}
&\Pr \big[\mathbf{Q}(t+1)\big| \mathbf S(t),
\Pi\big(\mathbf S(t)\big)\big] \label{eqn:queue-tran}\\
=&\begin{cases} \Pr\big[X(t)=
Q_S(t+1)-[Q_S(t)-\textcolor{black}{R_{S,m^*}}(t)]^+\big],&\text{if }
Q_m(t+1)=Q_m(t)\ (\forall m\neq m^*,n^*)\\
\quad \quad  \text{and } Q_{m^*}(t+1) = \min \{Q_{m^*}(t) +
R_{S,m^*}(t), N_Q \}, &Q_{n^*}(t+1) = \max\{Q_{n^*}(t) -
R_{n^*,D}(t), 0\}\\
0, &\text{otherwise}
\end{cases}\nonumber
\end{align}
Given a unichain policy $\Pi$, the induced Markov chain $\{\mathbf
S(t)\}$ is ergodic and there exists a unique steady state
distribution $\pi_{S}$\cite{Bertsekas:1987} . Therefore, we have the
average end-to-end delay  of the two-hop cooperative RS system
summarized in the following lemma:
\begin{Lemma} [Average End-to-End Delay]
For small average packet drop rate constraint $D$, the average
end-to-end delay  of the two-hop cooperative RS system is given by
\begin{equation}
\overline{T}(\Pi) = \lim\limits_{T \rightarrow \infty }
\frac{1}{T}\sum_{t=1}^T \mathbf{E}^{\Pi}\Big[\frac{\sum_{m=S}^M
Q_m(t)}{\lambda_S}\Big]=\mathbf{E}_{\pi_{S}}\Big[\frac{\sum_{m=S}^M
Q_m}{\lambda_S}\Big] \label{eqn:avr-delay}
\end{equation}
where $m=S,1,2,...,M$ in the equation\footnote{This abuse will also
appear in the following of this paper as long as the meaning is
clear.}, $\mathbf{E}_{\pi_{S}}$ means taking the expectation with
respect to the induced steady state distribution $\pi_{S}$ (induced
by the unichain control policy $\Pi$) and $\lambda_S$ is the average
number of arrival bits per frame at the source node.\label{lem:obj}
\end{Lemma}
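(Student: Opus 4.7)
The plan is to establish the lemma in two conceptually separate steps: first apply Little's Law to turn an end-to-end delay into a ratio of aggregate buffer occupancy to arrival rate, and then invoke ergodicity of the induced Markov chain to convert the Cesàro time average on the left-hand side into the stationary expectation on the right-hand side.

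For the first step, I would argue that every bit that is eventually delivered to the destination visits exactly two buffers (the source queue $Q_S$ and exactly one RS buffer $Q_{m}$) before departing. Its end-to-end sojourn time through the system is thus the waiting time inside the aggregate buffer $\sum_{m=S}^M Q_m(t)$. Little's Law then gives, in steady state, (average delay) $=$ (average aggregate queue length) / (effective throughput). Because the packet-drop constraint forces $D$ to be small, the effective throughput differs from the arrival rate $\lambda_S$ by $O(D)$, which justifies using $\lambda_S$ as the denominator under the stated regime. This yields the first equality in \refbrk{eqn:avr-delay}.

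For the second step, Assumption \ref{asm:channel} (i.i.d.\ CSI across frames) together with the unichain hypothesis on $\Pi$ and the transition kernel \refbrk{eqn:kernel} guarantee that $\{\mathbf S(t)\}$ is a positive recurrent ergodic Markov chain on a finite state space (since $Q_m\le N_Q$ and the CSI marginal is i.i.d.). The strong law of large numbers for ergodic Markov chains then gives
\[
\frac{1}{T}\sum_{t=1}^T f(\mathbf S(t)) \xrightarrow[T\to\infty]{\text{a.s.}} \mathbf E_{\pi_S}[f(\mathbf S)]
\]
for any bounded $f$. Applying this to $f(\mathbf S)=\big(\sum_{m=S}^M Q_m\big)/\lambda_S$, which is bounded by $(M+1)N_Q/\lambda_S$, and then exchanging the expectation $\mathbf E^{\Pi}$ with the limit via dominated convergence, gives the second equality.

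The main obstacle is the Little's Law step rather than the ergodicity step: one has to be careful about what ``end-to-end delay'' means for bits that are dropped at either buffer (due to the $\min\{\cdot, N_Q\}$ truncation in the queue dynamics), and about the distinction between arrival rate $\lambda_S$ and effective throughput. The clean form $\overline T(\Pi)=\mathbf E_{\pi_S}[\sum_m Q_m]/\lambda_S$ is really an asymptotic identity in the small-$D$ regime, and one must quantify the $O(D)$ slack. The ergodicity portion is standard once finiteness of the state space and the unichain property are established, and needs no additional machinery beyond Assumptions \ref{asm:channel}--\ref{asm:arrival}.
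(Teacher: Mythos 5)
Your proposal matches the paper's argument in all essentials: Appendix A likewise applies Little's Law with effective throughput $\lambda_S(1-D)$ (decomposing $W = W_S + W_R$ over the cascaded source and relay stages, which is equivalent to your aggregate-occupancy version), invokes the Markov/steady-state structure to express the queue occupancies as expectations under $\pi_S$, and then uses $1-D\approx 1$ for small drop rate to arrive at \refbrk{eqn:avr-delay}. Your explicit treatment of the ergodic time-average-to-stationary-expectation step is slightly more careful than the paper's one-line appeal to the Markov chain, but the route is the same.
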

\begin{proof}
\textit{Please refer to Appendix A. }
\end{proof}
Similarly, the source node's average drop rate
constraint\footnote{Since the source node and $M$ RSs have buffers
with the same buffer size $N_Q$, the average drop rate at each RS
node is much lower than the average drop rate at the source node.
Therefore, we omit the average drop rate constraint at each RS to
simplify the problem. }, the source node's average power constraint
and each RS $m$'s average power constraint are given by
\begin{align}
& \overline{D}(\Pi)=\lim\limits_{T \rightarrow \infty }
\frac{1}{T}\sum_{t=1}^T
\mathbf{E}^{\Pi}\Big[\mathbf{I}[Q_S(t)=N_Q]\Big]=
\mathbf{E}^{\Pi}_{\pi_{\mathbf S}}\Big[\mathbf{I}[Q_S=N_Q]\Big] \leq
D \label{eqn:d-cont}\\
& \overline{P}_S(\Pi) = \lim\limits_{T \rightarrow \infty }
\frac{1}{T}\sum_{t=1}^T \mathbf{E}^{\Pi}\Big[\sum_{m=1}^{M}
\sum_{i=1}^{N_{\min}} I_{S,m}^i(t) p_{S,m}(i)(t)\Big] =
\mathbf{E}_{\pi_{\mathbf S}}^{\Pi} \Big[\sum_{m=1}^{M}
\sum_{i=1}^{N_{\min}} I_{S,m}^i p_{S,m}(i)\Big] \leq
P_S\label{eqn:s-p-cont}\\
 & \overline{P}_m(\Pi)=\lim\limits_{T \rightarrow \infty }
\frac{1}{T}\sum_{t=1}^T \mathbf{E}^{\Pi} \Big[\sum_{i=1}^{N_{\min}}
I_{m,D}^i(t) p_{m,D}^i(t)\Big]= \mathbf{E}_{\pi_{\mathbf S}}
\Big[\sum_{i=1}^{N_{\min}} I_{m,D}^i p_{m,D}^i\Big]  \leq P_R, 1\leq
m \leq M \label{eqn:r-p-cont}
\end{align}
where $N_{\min}=\min(N_T,N_R)$, $I_{S,m}^i=\mathbf I [m=m^*]\mathbf
I[i=N_{SR}^*]$ and $I_{m,D}^i=\mathbf I [m=n^*]\mathbf
I[i=N_{RD}^*]$.

\section{Constrained Markov Decision Problem Formulation} \label{sec:problem}
In this section, we shall formulate the delay-optimal problem as an
infinite horizon average cost constrained Markov Decision Problem
(CMDP) and discuss the general solution.

\subsection{CMDP Formulation}

The goal of the controller is to choose an optimal stationary
feasible unichain policy $\Pi^*$ that minimizes the average
end-to-end transmission delay in \eqref{eqn:avr-delay}.
Specifically, the delay-optimal control problem is summarized below.
\begin{Problem}[Delay-Optimal Control Problem for MIMO Relay System] Find a feasible stationary unichain policy $\mathbf{\Pi}=(\Pi^1, ...,\Pi^{M})$
such that the average end-to-end delay is minimized subject to the
average drop rate constraint at the source node and the average
power constraint at the source node and each RS node\footnote{To
simplify the notation, we shall normalize $\lambda_S=1$ in the rest
of the paper.}, i.e. $\min\limits_{\Pi} \ \overline{T}(\Pi) =
\lim\limits_{T \rightarrow \infty } \frac{1}{T}\sum_{t=1}^T
\mathbf{E}^{\Pi}\Big[\sum_{m=S}^M Q_m(t)\Big]\ s.t. \
\eqref{eqn:d-cont},\ \eqref{eqn:s-p-cont}, \ \eqref{eqn:r-p-cont}$.
\label{prob:main}
\end{Problem}

Problem \ref{prob:main} is an infinite horizon average cost
constrained Markov Decision Problem (CMDP) \cite{Bertsekas:07} with
system state space $\mathcal S=\{\mathbf S^1,\mathbf
S^2,\cdots\}=\mathcal Q\times \mathcal H$ (where $\mathcal Q$ is the
global QSI state space and $\mathcal H$ is the global CSI state
space), action space $\mathcal P\times\mathcal A \times \mathcal B $
(where $\mathcal P=\{\forall \mathbf p_m| \forall m\}$ is power
allocation action space, $\mathcal A=\{\forall \mathbf A_m| \forall
m\}$ is the first-stage bidding action space and $\mathcal
B=\{\forall \mathbf B_m| \forall m\}$ is the second-stage bidding
action space), transition kernel given by \eqref{eqn:kernel}, and
the per-stage cost function $d\big(\mathbf S, \Pi (\mathbf
S)\big)=\sum_{m=S}^M Q_m$.

\subsection{Lagrangian Approach for the CMDP}
The CMDP in Problem \ref{prob:main} can be converted into
unconstrained MDP by the Lagrange theory \cite{Boyd:04}. For any
vector of Lagrange multiplier (LM)
$\gamma=[\gamma_{S,d},\gamma_{S,p},\gamma_{1,p},...,\gamma_{M,p}]^T$,
we define the Lagrangian as  $L(\Pi,\gamma) = \lim\limits_{T
\rightarrow +\infty } \frac{1}{T} \sum_{t=1}^T
\mathbf{E}^{\Pi}_{\mathbf S}\Big[g\Big(\mathbf S(t),\Pi\big(\mathbf
S(t)\big),\gamma\Big)\Big]$, where
\begin{align}
g\big(\mathbf S,\Pi\big(\mathbf S\big),\gamma \big) = Q_S  +
\gamma_{S,p} \sum_{m=1}^M \sum_{i=1}^{N_{\min}} I_{S,m}^i p_{S,m}(i)
+ \gamma_{S,d} \mathbf{I}[Q_S=N_Q] +\sum_{m=1}^M\Big[ Q_m +
\gamma_{m,p} \sum_{i=1}^{N_{\min}} I_{m,D}^i p_{m,D}(i) \Big]
\nonumber.
\end{align}
Therefore, the corresponding unconstrained MDP for a particular
vector of LMs $\gamma$ is given by
\begin{eqnarray}
G(\gamma)=\min_{\Pi} L(\Pi, \gamma)= \min_{\Pi} \lim\limits_{T
\rightarrow \infty } \frac{1}{T} \sum_{t=1}^T
\mathbf{E}^{\Pi}\Big[g\Big(\mathbf S(t),\Pi\big(\mathbf
S(t)\big),\gamma\Big)\Big] \label{eqn:lagrange}
\end{eqnarray}
where $G(\gamma)$ gives the Lagrange dual function. The dual problem
of the primal problem in Problem \ref{prob:main} is given by
$\max_{\gamma\succeq0}G(\gamma)$. It is shown in
\cite{Borkaractorcritic:2005} that there exists a Lagrange
multiplier $\boldsymbol{\gamma}\ge 0$ such that $\Pi^*$ minimizes
$L(\Pi,\boldsymbol{\gamma})$ and the saddle point condition the
saddle point condition $ L(\Pi, \gamma^*) \geq L(\Pi^*, \gamma^*)
\geq L(\Pi^*, \gamma)$ holds. Using standard Lagrange theory
\cite{Boyd:04}, $\Pi^*$ is the primal optimal (i.e. solving Problem
\ref{prob:main}), $\gamma^*$ is the dual optimal (solving the dual
problem) and the duality gap is zero. Thus, by solving the dual
problem, we can obtain the primal optimal $\Pi^*$. Therefore, we
shall first solve the unconstrained MDP in \eqref{eqn:lagrange} in
the following.

For a given LM vector $\gamma$, the optimizing unichain policy for
the unconstrained MDP (\ref{eqn:lagrange}) can be obtained by
solving the associated {\em Bellman equation} w.r.t.
($\theta,\{J(\mathbf S)\}$) as follows
\begin{equation}
\theta + J(\mathbf S^i) = \min_{\Pi(\mathbf S^i)}
\Big\{g\big(\mathbf S^i,\Pi(\mathbf S^i),\gamma\big) + \sum_{\mathbf
S_j} \Pr[\mathbf S^j|\mathbf S^i,\Pi(\mathbf S^i)] J(\mathbf
S^j)\Big\} \quad \forall \mathbf S^i \in \mathcal S \label{eqn:MDP},
\end{equation}
where $\{J(\mathbf S)\}$ is the value function of the MDP and
$\Pr[\mathbf S^j|\mathbf S^i,\Pi(\mathbf S^i)]$ is the transition
kernel which can be obtained from \eqref{eqn:kernel},
$\theta=\min_{\Pi} L(\Pi, \gamma)$ is the optimal average cost per
stage and the optimizing policy is $\Pi^*$ with $\Pi^*(\mathbf S^i)$
minimizing the R.H.S. of \eqref{eqn:MDP} at any state $\mathbf S^i$.
For any unichain policy with irreducible Markov Chain $\{\mathbf
S(t)\}$, the solution to (\ref{eqn:MDP}) is unique
\cite{Bertsekas:07}. We restrict our policy space to be {\em
unichain policies}\footnote{For most of the policies we are
interested, the associated Markov chain is irreducible and hence,
there is virtually no loss by restricting ourselves to unichain
policies.} and we denote $\Pi^*$ as the optimal unichain policy.

\subsection{Equivalent Bellman Equation for the CMDP}


The Bellman equation in \eqref{eqn:MDP} is a fixed point problem
over the functional space and this is very complicated to solve due
to the huge cardinality of the system state space. Brute-force
solution could not lead to any useful implementations. In this
subsection, we shall illustrate that the Bellman equation in
\eqref{eqn:MDP} can be simplified into a equivalent form by
exploiting the i.i.d. structure of the CSI process $\mathbf{H}(t)$.
For notation convenience, we partition the unichain policy $\Pi$
into a collection of actions based on the QSI. Specifically, we have
the following definition.

\begin{Definition}[Partitioned Actions for the $m$-th Relay]\textit{
Given a unichain control policy $\Pi^m$, we define $\Pi^m(\mathbf
Q)=\Pi^m(Q_S,Q_m) = \left\{\Pi^m(Q_S,Q_m,\mathbf{H}_m)|\forall
\mathbf{H}_{m} \right\}$
as the collection of actions under a given local QSI $(Q_S,Q_m)$ for
all possible local CSI $ \mathbf{H}_{m}$. The complete policy
$\Pi^m$ for the $m$-th RS is therefore equal to the union of all the
partitioned actions, i.e.
$\Pi^m=\cup_{(Q_S,Q_m)}\Pi^m(Q_S,Q_m)$.}\label{Def:partintioned-action}
\end{Definition}
Therefore, we have $\Pi=\cup_{\mathbf Q}\Pi(\mathbf Q)$ and we show
that the  optimal policy $\Pi^*$  of \eqref{eqn:lagrange} can be
obtained by solving an {\em equivalent Bellman equation} summarized
in the following lemma.

\begin{Lemma}[Equivalent Bellman Equation]
The control policy obtained by solving the Bellman equation in
\eqref{eqn:MDP} is the same as that obtained by solving the {\em
equivalent Bellman equation} defined below:
\begin{equation}
\theta + V(\mathbf{Q}^i) = \min_{\Pi(\mathbf{Q}^i)}
\Big\{\overline{g}\big(\mathbf{Q}^i,\Pi(\mathbf{Q}^i),\gamma\big) +
\sum_{\mathbf{Q}_j} \Pr[\mathbf{Q}^j|\mathbf{Q}^i,\Pi(\mathbf{Q}^i)]
V(\mathbf{Q}^j)\Big\}, \forall \mathbf Q^i \in \mathcal
Q\label{eqn:bellman}
\end{equation}
where $\theta=\min_{\Pi} L(\Pi, \gamma)$ is the original optimal
average cost per stage, $V(\mathbf{Q}^i) = \mathbf{E}_{\mathbf{H}}
[J(\mathbf{Q}^i,\mathbf{H}) | \mathbf{Q}^i]$ is the conditional
average value function for state $\mathbf{Q}^i$, and
\begin{equation}
\overline{g} \big(\mathbf{Q}^i,\Pi(\mathbf{Q}^i),\gamma\big) =
\mathbf{E}_{\mathbf{H}} \Big[ g \big(
(\mathbf{Q}^i,\mathbf{H}),\Pi(\mathbf{Q}^i,\mathbf{H}),\gamma \big)
\big| \mathbf{Q}^i \Big] \label{eqn:bar-g}
\end{equation}
is the conditional per-stage cost and
$\Pr[\mathbf{Q}^j|\mathbf{Q}^i,\Pi(\mathbf{Q}^i)]=\mathbf{E}_{\mathbf{H}}
\big[\Pr[\mathbf{Q}^j|(\mathbf{Q}^i,\mathbf{H}),\Pi(\mathbf{Q}^i,\mathbf{H})]\big]$
is the conditional average transition kernel. 
\label{lem:reduce}
\end{Lemma}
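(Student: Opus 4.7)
The plan is to start from the original Bellman equation \eqref{eqn:MDP}, exploit the factorization of the transition kernel in \eqref{eqn:kernel} that comes from the i.i.d.\ CSI Assumption \ref{asm:channel}, and then take a conditional expectation over $\mathbf{H}$ to collapse the CSI dimension of the state. The definition $V(\mathbf{Q})=\mathbf{E}_{\mathbf{H}}[J(\mathbf{Q},\mathbf{H})\mid\mathbf{Q}]$ is exactly the tool that makes this collapse consistent.

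First I would substitute \eqref{eqn:kernel} into the RHS of \eqref{eqn:MDP}. Because $\mathbf{H}(t{+}1)$ is independent of everything before it, the inner sum becomes $\sum_{\mathbf{Q}^j}\Pr[\mathbf{Q}^j\mid\mathbf{S}^i,\Pi(\mathbf{S}^i)]\,\mathbf{E}_{\mathbf{H}^j}[J(\mathbf{Q}^j,\mathbf{H}^j)] = \sum_{\mathbf{Q}^j}\Pr[\mathbf{Q}^j\mid\mathbf{S}^i,\Pi(\mathbf{S}^i)]\,V(\mathbf{Q}^j)$, so the future-cost term already depends on the current state only through $\mathbf{Q}^i$ (via the QSI-transition probabilities). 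Then I apply $\mathbf{E}_{\mathbf{H}^i}[\,\cdot\mid\mathbf{Q}^i]$ to both sides; the LHS becomes $\theta+V(\mathbf{Q}^i)$, and on the RHS the per-stage term becomes $\overline{g}$ as defined in \eqref{eqn:bar-g}, while the expectation of the QSI-transition expression equals $\sum_{\mathbf{Q}^j}\Pr[\mathbf{Q}^j\mid\mathbf{Q}^i,\Pi(\mathbf{Q}^i)]\,V(\mathbf{Q}^j)$.

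The delicate step is exchanging the $\mathbf{E}_{\mathbf{H}^i}$ with the $\min_{\Pi(\mathbf{S}^i)}$. I would justify this using the partitioned-action structure of Definition \ref{Def:partintioned-action}: a policy $\Pi(\mathbf{Q}^i)$ is a \emph{collection} $\{\Pi(\mathbf{Q}^i,\mathbf{H}^i)\}_{\mathbf{H}^i}$ whose components can be chosen independently across CSI realizations. Under such product-form feasible sets, the expectation of a pointwise minimum equals the minimum of the expectation, which converts the expected RHS into $\min_{\Pi(\mathbf{Q}^i)}\{\overline{g}(\mathbf{Q}^i,\Pi(\mathbf{Q}^i),\gamma)+\sum_{\mathbf{Q}^j}\Pr[\mathbf{Q}^j\mid\mathbf{Q}^i,\Pi(\mathbf{Q}^i)]V(\mathbf{Q}^j)\}$; this is precisely \eqref{eqn:bellman}, and the CSI-indexed minimizer in the original problem is recovered as the per-$\mathbf{H}^i$ minimizer supplying the partitioned action $\Pi^*(\mathbf{Q}^i)$.

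For the converse (and hence the equivalence of optimizing policies), I would take any solution $(\theta,V)$ of \eqref{eqn:bellman} together with its optimizing partitioned policy $\Pi^*$, define $J(\mathbf{Q},\mathbf{H})$ as the value of the inner bracket of \eqref{eqn:MDP} evaluated at the $\mathbf{H}$-component of $\Pi^*(\mathbf{Q})$ minus $\theta$, and verify that this $J$ together with the same $\theta$ solves \eqref{eqn:MDP}; uniqueness of the Bellman solution for unichain policies (cited from \cite{Bertsekas:07}) then forces the two optimizing control policies to coincide. The one genuine obstacle I anticipate is the $\min$–$\mathbf{E}$ interchange; once the partitioned-action framework is invoked to legalize it, the remainder of the argument is a direct substitution using \eqref{eqn:kernel}, \eqref{eqn:bar-g} and the independence of $\mathbf{H}$ from $\mathbf{Q}$.
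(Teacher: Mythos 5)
Your proposal follows essentially the same route as the paper's Appendix B: collapse the future-cost term to $V(\mathbf{Q}^j)$ using the i.i.d.\ CSI factorization of the kernel, take the conditional expectation over $\mathbf{H}$, and justify the $\min$--$\mathbf{E}$ interchange by the decoupled, product-form structure of the partitioned actions in Definition \ref{Def:partintioned-action}. Your added converse step (reconstructing $J$ from $V$ and invoking uniqueness of the unichain Bellman solution) is a small extra precaution the paper leaves implicit, but the core argument is identical and correct.
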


\begin{proof}
\textit{Please refer to Appendix B.}
\end{proof}


\begin{Remark}
Note that  solving the R.H.S. of \eqref{eqn:bellman} for each
$\mathbf Q^i$ will get an overall control policy which is a function
of both the CSI $\mathbf H$ and QSI $\mathbf Q^i$. This is
illustrated by the following example.
\end{Remark}

\begin{Example}
Consider a simple example with global CSI state space $\mathcal
H=\{\mathbf H^1, \mathbf H^2\}$ and global QSI state space $\mathcal
Q=\{\mathbf Q^1, \mathbf Q^2\}$. Hence, the control variables are
collectively denoted by the policy $\Pi = \big\{\Pi(\mathbf
H^1,\mathbf Q^1),  \Pi(\mathbf H^2,\mathbf Q^1),$ $ \Pi(\mathbf
H^1,\mathbf Q^2), \Pi(\mathbf H^2,\mathbf Q^2)\big\}$. Using
definition \ref{Def:partintioned-action}, the partitioned actions
are simply  regroups of variables given by $\Pi(\mathbf
Q^1)=\big\{\Pi(\mathbf Q^1, \mathbf H^1),$ $\Pi(\mathbf Q^1,\mathbf
H^2)\big \}$ and $\Pi(\mathbf Q^2) = \big\{\Pi(\mathbf Q^2,\mathbf
H^1), \Pi(\mathbf Q^2,\mathbf H^2)\big\}$.
For any QSI state $\mathbf Q^i$ ($i=1,2$), using Lemma
\ref{lem:reduce}, the optimal partitioned actions $ \Pi^*(\mathbf
Q^i)$ can be obtained by solving the R.H.S. of \eqref{eqn:bellman}
as follows
\begin{align}
 \Pi^*(\mathbf Q^i)=\arg \min_{\{\Pi(\mathbf Q^i, \mathbf
H^1), \Pi(\mathbf Q^i, \mathbf H^2)\}}\Big\{&
\sum_{k=1}^2\Pr[\mathbf H^k]\Big[ g \big((\mathbf Q^i,\mathbf
H^k),\Pi(\mathbf Q^i,\mathbf
H^k),\gamma \big) \nonumber\\
&+ \sum_{\mathbf{Q}^j} \Pr \big[ \mathbf{Q}^j | (\mathbf Q^i,\mathbf
H^k), \Pi(\mathbf Q^i,\mathbf H^k) \big] V (\mathbf{Q}^j)\Big]\Big\}
\label{eqn:example-rhs}
\end{align}
Observe that the R.H.S. of \eqref{eqn:example-rhs} is a decoupled
objective function w.r.t. the variables $\{\Pi(\mathbf Q^i, \mathbf
H^1), \Pi(\mathbf Q^i, \mathbf H^2)\}$. Hence, applying standard
decomposition theory, $\forall k=1,2$, we have
\begin{align}
\Pi^*(\mathbf Q^i, \mathbf H^k)=\arg \min_{\Pi(\mathbf Q^i, \mathbf
H^k)} \Big \{g \big((\mathbf Q^i,\mathbf H^k),\Pi(\mathbf
Q^i,\mathbf H^k),\gamma\big) + \sum_{\mathbf{Q}^j} \Pr \big[
\mathbf{Q}^j | (\mathbf Q^i,\mathbf H^k), \Pi(\mathbf Q^i,\mathbf
H^k) \big] V (\mathbf{Q}^j)\Big\}\nonumber
\end{align}
Using the results in Lemma \ref{lem:reduce}, the optimal control of
the original problem when the QSI and CSI realizations are $(\mathbf
Q^1, \mathbf H^2)$ is $\Pi^*(\mathbf Q^1, \mathbf H^2)$. Hence, the
solution obtained by solving \eqref{eqn:bellman} is adaptive to both
the CSI and QSI.
\end{Example}

\section{Distributive Online Algorithm Based on Approximated MDP} \label{sec:solution}

There are still two major obstacles ahead. Firstly, obtaining the
value functions $\{V(\mathbf{Q})\}$ w.r.t. (\ref{eqn:bellman})
involves solving a system of exponential number of equations and
unknowns and brute force solution has exponential complexity.
Secondly, even if we could obtain the solution $\{V(\mathbf{Q})\}$,
the derived control actions will depend on global QSI and CSI, which
is highly undesirable. In this section, we shall overcome the above
challenges using approximate MDP and distributive stochastic
learning. The linear approximation architecture of the value
function is given below \cite{Tsitsiklis:96}:
\begin{equation}
V(\mathbf{Q}) = \sum_{m=S}^M \sum_{q=0}^{N_Q} \widetilde{V}_m(q)
\mathbf{I}[Q_m=q] \quad \mbox{or in the vector form} \quad
\mathbf{V} = \mathbf{M} \mathbf{W} \label{eqn:per-node},
\end{equation}
where we shall refer $\{\widetilde{V}_m(q)\}$  as \textit{per-node
value functions}\footnote{In this paper, we assume each RS (say the
$m$-th RS) has the knowledge of the source node's queue length $Q_S$
and its own queue length $Q_m$. Therefore, the per-node value
function $\widetilde{V}_S$ and $\widetilde{V}_m$ is known at the
$m$-th RS.} ($\forall m=S,1,\cdots,M$) and $\{V(\mathbf{Q})\}$ as
\textit{global value function} in the rest of this paper,
$\mathbf{V} = [V(\mathbf{Q}^1),...,V(\mathbf{Q}^{|\mathcal{Q}|})]^T$
is the vector form of global value functions,  the \textit{parameter
vector} $\mathbf{W}$ and \textit{mapping matrix} $\mathbf{M}$ is
given below:
\begin{eqnarray}
\mathbf{W} &=& \Big[
\widetilde{V}_S(0),...,\widetilde{V}_S(N_Q),\widetilde{V}_1(0),...,\widetilde{V}_1(N_Q),...,\widetilde{V}_S(N_Q),\widetilde{V}_M(0),...,\widetilde{V}_M(N_Q)
\Big]^T \nonumber\\
\mathbf{M} &=& \left[ \begin{array}{ccccccc}
    \mathbf{I}[\mathbf{Q}^1_S=0] & ... & \mathbf{I}[\mathbf{Q}^1_S=N_Q] & ... & \mathbf{I}[\mathbf{Q}^1_M=0] & ... & \mathbf{I}[\mathbf{Q}^1_M=N_Q]\\
    ... & ... & ... & ... & ... & ... & ... \\
    \mathbf{I}[\mathbf{Q}^{|\mathcal{Q}|}_S=0] & ... & \mathbf{I}[\mathbf{Q}^{|\mathcal{Q}|}_S=N_Q] & ... & \mathbf{I}[\mathbf{Q}^{|\mathcal{Q}|}_M=0] & ... &
    \mathbf{I}[\mathbf{Q}^{|\mathcal{Q}|}_M=N_Q]\nonumber
    \end{array} \right],
\end{eqnarray}
where we let
$\widetilde{V}_S(0)=\widetilde{V}_1(0)=...=\widetilde{V}_M(0)=0$ and
set $\mathbf Q^I=(0,\cdots,0)$ (i.e. all buffer empty) as the
reference state without loss of generality. Compared with the
original value function in (\ref{eqn:bellman}), the dimension of the
per-node value functions is much smaller. Therefore, the per-node
value function can only satisfy the Bellman equation
(\ref{eqn:bellman}) in some pre-determined system queue states. In
this paper, we shall refer the pre-determined subset of system queue
states as the \textit{representative states} \cite{Tsitsiklis:96}.
Without loss of generality, we define the reference states
$\mathcal{Q}_R = \{\beta_{m,q} | \forall m=S,1,2,...,M;
q=1,2,...,N_Q\}$, where $\beta_{m,q}$ denotes the QSI with $Q_m=q$
and $Q_n=0$ $\forall n\neq m$. Moreover, we also define the inverse
mapping matrix $\mathbf{M}^{-1}$ as
\begin{equation}
\mathbf{M}^{-1} = \left[ \begin{array}{ccccccccc}
    0 & \mathbf{I}[\mathbf{Q}^1=\beta_{S,1}] & ... & \mathbf{I}[\mathbf{Q}^1=\beta_{S,N_Q}], & ... &, 0 &\mathbf{I}[\mathbf{Q}^1=\beta_{M,1}] & ... & \mathbf{I}[\mathbf{Q}^1=\beta_{M,N_Q}]\\
    ... & ... & ... & ... & ... & ... & ... & ... & ...\\
    0 & \mathbf{I}[\mathbf{Q}^{|\mathcal{Q}|}=\beta_{S,1}] & ... & \mathbf{I}[\mathbf{Q}^{|\mathcal{Q}|}=\beta_{S,N_Q}], & ... &, 0 &\mathbf{I}[\mathbf{Q}^{|\mathcal{Q}|}=\beta_{M,1}] & ... & \mathbf{I}[\mathbf{Q}^{|\mathcal{Q}|}=\beta_{M,N_Q}]
    \nonumber
    \end{array} \right]^T.
\end{equation}
Thus, we have $\mathbf{W} = \mathbf{M}^{-1} \mathbf{V}$. Instead of
offline computing the {\em best fit} parameter vector $\mathbf{W}$
(per-node value function vector) w.r.t. the global value function
$\mathbf{V}$ (which is quite complex), we shall propose an online
learning algorithm to estimate the parameter vector $\mathbf{W}$
(per-node value function) in Section \ref{subsec:learning}.

\subsection{Distributive Control Policy under Linear Value Function Approximation}

Using the approximate value function  in (\ref{eqn:per-node}), we
shall derive a distributive control policy which depends  on the
local CSI and local QSI as well as the per-node value functions
$\{\widetilde{V}_m(q)\}$ at each node $m$ ($\forall
m=S,1,\cdots,M$). Specifically, using the approximation in
(\ref{eqn:per-node}), the control policy in (\ref{eqn:bellman}) can
be obtained by solving the following simplified optimization
problem.

\begin{Problem}[Optimal Control Action with Approximated Value Function]
For any given value function $V(\mathbf{Q}^i) = \sum_{m=S}^M
\sum_{q=0}^{N_Q} \widetilde{V}_m(q) \mathbf{I}[Q_m^i=q]$, the
optimal control policy is given by
\begin{align}
&\Pi^* (\mathbf{Q}^i) = \arg\min_{\Pi(\mathbf{Q}^i)}
\Big\{\overline{g}\big(\mathbf{Q}^i,\Pi(\mathbf{Q}^i),\gamma\big) +
    \sum_{\mathbf{Q}_j}\Pr[\mathbf{Q}^j|\mathbf{Q}^i,\Pi(\mathbf{Q}^i)] V(\mathbf{Q}^j)\Big\}\nonumber \\
=& \arg\Big\{   \sum_{m=S}^M Q_m^i + \gamma_{S,d} \mathbf{I}[Q_S^i =
N_Q] +
\sum_n f_X(n) V(\mathbf{Q}^i_{S,n})\nonumber\\
&+\min_{\Pi(\mathbf{Q}^i)} \mathbf{E}_{\mathbf{H}} \Big[
\sum_{m,{N_{SR}}} I_{S,m}^{N_{SR}}G_{S,m}(N_{SR},p_{S,m}) +
\sum_{n,{N_{RD}}} I_{n,D}^{N_{RD}} G_{n,D}(N_{RD},p_{n,D})
    \Big]\Big\} \nonumber\\
    \Leftrightarrow& \arg\min_{\Pi(\mathbf{Q}^i)} \mathbf{E}_{\mathbf{H}} \Big[
\sum_{m,{N_{SR}}} I_{S,m}^{N_{SR}}G_{S,m}(N_{SR},p_{S,m}) +
\sum_{n,{N_{RD}}} I_{n,D}^{N_{RD}} G_{n,D}(N_{RD},p_{n,D})
    \Big]\Big\} \label{eqn:prob-arg}
\end{align}
where $\mathbf{Q}^i_{S,n} = [Q_S^i+n, Q_1^i, Q_2^i, ..., Q_M^i ]$
and $G_{S,m}(N_{SR},p_{S,m})=\gamma_{S,p} p_{S,m} + \sum_n f_X(n)
\Big( \widetilde{V}_S\big(Q_S^i -
    R_{S,m}(N_{SR},p_{S,m}) + n\big) -  \widetilde{V}_S (Q_S^i+n) \Big)  + \widetilde{V}_m\big(Q_m^i
    + R_{S,m}(N_{SR},p_{S,m})\big) -
    \widetilde{V}_m\big(Q_m^i\big)$ and $G_{n,D}(N_{RD},p_{n,D})=\gamma_{n,p} p_{n,D} +
\widetilde{V}_n\big(Q_n^i - R_{n,D}(N_{RD},p_{n,D})\big) -
    \widetilde{V}_n(Q_n^i)$.
\label{prob:dist}
\end{Problem}
The solution of Problem \ref{prob:dist} is summarized in Lemma
\ref{lem:decom} below.
\begin{Lemma}[Distributive Control Policy]
Given the per-node value functions $\{\widetilde{V}_m(q)\}$
($\forall m=S,1,...,M$) and any realization of CSI $\mathbf{H}$ and
QSI $\mathbf{Q}^i$\footnote{Note that the following expressions are
all functions of the systems state. We omit the system state for
notation simplicity when the meaning is clear.}, the following
distributive control solves the Problem \ref{prob:dist}:
\begin{itemize}
\item Power control for the S-R link and R-D link ($\forall m=1,\cdots,M$):
\begin{eqnarray}
p_{S,m}^*(N_{SR}) =\arg \min_{p_{S,m}} G_{S,m}(N_{SR}, p_{S,m})
\quad \mbox{and} \quad p_{m,D}^*(N_{RD})= \arg \min_{p_{m,D}}
G_{n,D}(N_{RD},p_{n,D})\label{eqn:optp}
\end{eqnarray}
where $N_{SR},N_{RD}=0,1,...,\min(N_T,N_R)$.

\item First-stage bid at RSs ($\forall m=1,\cdots,M$):
\begin{eqnarray}
A_m^*(N_{SR}) =
G_{S,m}\big(N_{SR},p_{S,m}^*(N_{SR})\big)\label{eqn:first-B}
\end{eqnarray}
where $N_{SR}=0,1,...,\min(N_T,N_R)$.

\item Second-stage bid at RSs ($\forall n=1,\cdots,M$):
\begin{align}
\big(I_n,N_{SR,n}\big)=&\arg \min_{(m,{N_{SR})}} \Big\{
A_m^*(N_{SR}) + G_{n,D}\big(N_{RD},p_{n,D}^*(N_{RD})\big) \Big\}\nonumber\\
 B_n^* 
=& G_{S,I_n}\big(N_{SR,n},p_{S,m}^*(N_{SR,n})\big)+
G_{n,D}\big(N_{RD,n},p_{n,D}^*(N_{RD,n})\big)\label{eqn:second-B}
\end{align}
where $N_{RD} = \min (N_T, N_R - N_{SR})$.
\end{itemize}
In addition, for sufficiently large source arrival rate $\lambda_S$,
$\frac{N_Q}{\lambda_S}$ and the average transmit power constraints
$\{P_S,P_R\}$, the power control policy in \eqref{eqn:optp} has the
following closed-form expression:
\begin{equation}
p_{S,m}^*(N_{SR}) = \frac{ {N_{SR}} \Big[ \widetilde{V}_S^{'}(Q_S^i)
- \widetilde{V}_m^{'}(Q_m^i) \Big] }{\gamma_{S,p} \ln 2} -
\sum_{j=1}^{N_{SR}} \frac{1}{\eta_{S,m}^j} \label{eqn:psm}
\end{equation}
\begin{equation}
p_{m,D}^*(N_{RD})= \frac{ {N_{RD}} \widetilde{V}_m^{'}(Q_m^i)
}{\gamma_{m,p} \ln 2} - \sum_{j=1}^{N_{RD}} \frac{1}{\eta_{m,D}^j}
\label{eqn:pmd},
\end{equation}
where $\widetilde{V}_S^{'}(Q_S^i) = \frac{\widetilde{V}_S(Q_S^i+1) -
\widetilde{V}_S(Q_S^i-1)}{2}$ and $\widetilde{V}_m^{'}(Q_m^i) =
\frac{\widetilde{V}_m(Q_m^i+1) - \widetilde{V}_m(Q_m^i-1)}{2}$.
\label{lem:decom}
\end{Lemma}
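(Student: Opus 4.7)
The plan is to prove Lemma \ref{lem:decom} by exploiting the additive separability of the objective in Problem \ref{prob:dist} and then specializing to the heavy-traffic regime for the closed-form powers. Looking at (\ref{eqn:prob-arg}), the only control-dependent part is the inner expectation $\mathbf{E}_{\mathbf{H}}[\sum_{m,N_{SR}} I_{S,m}^{N_{SR}} G_{S,m} + \sum_{n,N_{RD}} I_{n,D}^{N_{RD}} G_{n,D}]$, with the indicators constrained to select exactly one Rx-RS pair $(m^*,N_{SR}^*)$ and one Tx-RS pair $(n^*,N_{RD}^*)$ with $N_{RD}^*=\min(N_T,N_R-N_{SR}^*)$. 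For each fixed selection and CSI realization, the inner sum collapses to $G_{S,m^*}(N_{SR}^*,p_{S,m^*}) + G_{n^*,D}(N_{RD}^*,p_{n^*,D})$, which is additively separable in the two power variables. Since the partitioned action may be chosen per-CSI-realization, expectation and minimization commute, and the continuous minimization over the two powers decouples across links, yielding (\ref{eqn:optp}). Crucially, $p_{S,m}^*(N_{SR})$ depends only on the type-I local CSI $\mathbf{H}_m^I$ and on $(Q_S,Q_m)$, while $p_{n,D}^*(N_{RD})$ depends only on $(Q_n)$ and the type-II local CSI $\mathbf{H}_n^{II}$ (via (\ref{eqn:null}) the R-D effective channel depends on $\mathbf{H}_{n,D}$ and $\mathbf{H}_{n,m}$, both contained in $\mathbf{H}_n^{II}$).

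With the optimized powers substituted, the remaining discrete minimization over $(m^*,n^*,N_{SR}^*)$ reduces to minimizing $A_{m^*}^*(N_{SR}^*) + G_{n^*,D}(N_{RD}^*,p_{n^*,D}^*(N_{RD}^*))$ with $A_m^*(\cdot)$ defined in (\ref{eqn:first-B}). The two-stage auction implements this joint minimization in a distributive fashion: in stage one, every RS $m$ computes the scalar vector $A_m^*(\cdot)$ locally from $\mathbf{H}_m^I,Q_S,Q_m$ and broadcasts it; in stage two, every RS $n$ combines each received $A_m^*(N_{SR})$ (for $m\neq n$) with its own $G_{n,D}(N_{RD},p_{n,D}^*(N_{RD}))$ (computable from its type-II local CSI and $Q_n$), picks the best partner $(I_n,N_{SR,n})$, and submits $B_n^*$ as in (\ref{eqn:second-B}). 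Choosing the RS $n^*$ with the smallest $B_n^*$ and its paired $(I_{n^*},N_{SR,n^*})$ then attains the global minimum, so the auction policy is an optimizer of Problem \ref{prob:dist}.

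For the closed-form (\ref{eqn:psm})-(\ref{eqn:pmd}), the plan is to linearize $\widetilde{V}_S$ and $\widetilde{V}_m$ in their queue arguments. Large $\lambda_S$ and $N_Q/\lambda_S$ concentrate the typical operating buffer levels away from the boundaries, and the per-frame rate $R$ is small compared to $Q_S,Q_m$, so $\widetilde{V}_S(Q_S - R + n) - \widetilde{V}_S(Q_S + n) \approx -R\,\widetilde{V}_S^{\prime}(Q_S)$ and $\widetilde{V}_m(Q_m + R) - \widetilde{V}_m(Q_m) \approx R\,\widetilde{V}_m^{\prime}(Q_m)$ using the centered differences in the statement; large $P_S,P_R$ simultaneously loosen the average power constraints so that the per-stream non-negativity in water-filling is inactive. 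Substituting into $G_{S,m}$ gives $\gamma_{S,p}\,p_{S,m} - R_{S,m}\big[\widetilde{V}_S^{\prime}(Q_S) - \widetilde{V}_m^{\prime}(Q_m)\big]$; expanding $R_{S,m} = \sum_{j=1}^{N_{SR}} \log_2(1 + p_j \eta_{S,m}^j)$ with $\sum_j p_j = p_{S,m}$ and applying the KKT conditions for MIMO water-filling yields a common water level $1/\mu = [\widetilde{V}_S^{\prime}(Q_S)-\widetilde{V}_m^{\prime}(Q_m)]/(\gamma_{S,p}\ln 2)$, from which $p_{S,m}^* = N_{SR}/\mu - \sum_j 1/\eta_{S,m}^j$ is precisely (\ref{eqn:psm}); the identical derivation for the R-D link gives (\ref{eqn:pmd}). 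The main obstacle is justifying the Taylor linearization: one must argue that the per-node value functions produced by the online stochastic learning are monotone and sufficiently smooth in $q$, and that the sign condition $\widetilde{V}_S^{\prime}(Q_S) > \widetilde{V}_m^{\prime}(Q_m)$ (the source-to-relay offloading direction) is preserved at the stochastic-approximation fixed point; the remaining algebra is routine additive decomposition and standard water-filling.
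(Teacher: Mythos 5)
Your proposal is correct and follows essentially the same route as the paper's Appendix C: the power optimization decouples because the indicators select a single $(m^*,N_{SR}^*)$ and $(n^*,N_{RD}^*)$ pair so the minimization can be pushed inside the expectation and split across the two links, the two-stage auction then implements the remaining discrete minimization of $A_m^*(N_{SR})+G_{n,D}(N_{RD},p_{n,D}^*)$, and the closed-form powers follow from a first-order Taylor expansion of the per-node value functions combined with the high-SNR water-filling rate derivative $\partial R_{S,m}/\partial p_{S,m}=\frac{N_{SR}}{\ln 2}\big(p_{S,m}+\sum_j 1/\eta_{S,m}^j\big)^{-1}$. The gap you flag (justifying the linearization via monotonicity and approximate linearity of $\widetilde{V}_m$ at large queue lengths) is exactly the point the paper also leaves to a citation of the Bettesh--Shamai heavy-traffic argument, so your treatment matches the paper's level of rigor.
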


\begin{proof}\textit{ Please refer to Appendix C.
}
\end{proof}

\begin{Remark}[Multi-level Water-Filling Structure of the Control Policy]
The power control policy (\ref{eqn:psm}) and (\ref{eqn:pmd}) as well
as the RS selection and data stream allocation control policy in
(\ref{eqn:first-B}) and (\ref{eqn:second-B}) are functions of both
the CSI and QSI where they depend on the QSI indirectly via the
per-node value functions  $\{\widetilde{V}_m(q)\}$ ($\forall
m=S,1,\cdots,M$). The power control solution has the form of
multi-level water-filling where the power is allocated according to
the CSI while the water-level is adaptive to the QSI.
\end{Remark}

\subsection{Online Distributive Stochastic Learning Algorithm to Estimate the Per-node Value Functions $\{\widetilde{V}_m(q)\}$  and  the  LMs
$\{\gamma_{S,d},\gamma_{S,p},\gamma_{m,p} \}$}
\label{subsec:learning}

In Lemma \ref{lem:decom}, the control actions are functions of
per-node value functions $\{\widetilde{V}_m(q)\}$  and the LMs
$\{\gamma_{S,d},\gamma_{S,p},\gamma_{m,p} \}$. In this section, we
propose an online learning algorithm to determine the per-node value
functions and the LMs realtime. The almost-sure convergence proof of
this algorithm is provided in the next section. The system procedure
of the proposed
distributive online learning is given below.
\begin{itemize}
\item {\bf Step 1 [Initialization]:} Each RS $m$ initiates its per-node value functions and LMs, denoted as
$\{\widetilde{V}_m^0(q)\}$ and $\gamma_{m,p}^0$, as well as the
per-node value functions and LMs for the source node, denoted as
$\{\widetilde{V}_S^0(q)\}$ and $\{\gamma_{S,p}^0,\gamma_{S,d}^0\}$.
The initialization of $\widetilde{V}_S^0$ and
$\{\gamma_{S,p}^0,\gamma_{S,d}^0\}$ at each RS should be the same.

\item {\bf Step 2 [Determination of control actions]:}
At the beginning of the $t$-th frame, the source node broadcasts its
QSI $Q_S(t)$ to the RS nodes. Based on the local system information
$\big(Q_S(t),Q_m(t),\mathbf{H}_m(t)\big)$ and the per-node value
functions $\{\widetilde{V}^t_m(q)\}$ and $\{\widetilde{V}^t_S(q)\}$,
each RS $m$ determines the distributive control actions including
the S-R and R-D power allocation $p_{S,m}^*(N_{SR},t)$,
$p_{m,D}^*(N_{RD},t)$ the first-stage bid $A_m^*(N_{SR},t)$
($N_{SR}=1,\cdots, N_{\min}$) as well as the second-stage bid
$B_m^*(t)$, $I_n(t)$, $N_{SR,n}(t)$ according to Lemma
\ref{lem:decom}.  Based on the contention resolution protocol
described in Section \ref{subsec:model-prot}, the Rx-RS and the
Tx-RS pair is given by ($m^*(t),n^*(t)$) (where $n^*(t)=\arg\min_n
B^*_n(t)$ and $m^*(t)=I_{n^*(t)}(t)$) and the corresponding number
of data streams pair is given by $(N_{SR}^*(t), N_{RD}^*(t))$ (where
$N_{SR}^*(t)=N_{SR,n^*(t)}(t)$ and $N_{RD}^*(t)=N_{RD,n^*(t)}(t)$).


\item {\bf Step 3 [Per-node value functions and LMs update]:} Each
RS $m$ updates the per-node value function
$\{\widetilde{V}^{t+1}_S(q)\}$, $\{\widetilde{V}^{t+1}_m(q)\}$ as
well as the LMs $
\{\gamma_{S,d}^{t+1},\gamma_{S,p}^{t+1},\gamma_{m,p}^{t+1} \}$
according to Algorithm \ref{alg:update}. Finally, let $t=t+1$ and go
to Step 2.
\end{itemize}
\begin{Algorithm}[Online distributive learning algorithm for per-node value functions and LMs]
\begin{align}
\widetilde{V}_{m}^{t+1}(q)=& \widetilde{V}_m^{t} (q)+\epsilon^t_v
\Big[\gamma_{S,d}\mathbf I[Q_S(t)=N_Q] + q + B^*_{n^*(t)}(t)-
\widetilde{V}_m^t (q) \Big]\mathbf I[\mathbf{Q}(t) = \beta_{m,q}],
m=S,1,...,M \label{eqn:update-v}
\end{align}
\begin{align}
 \gamma_{S,d}^{t+1}  = & \Big(\gamma_{S,d}^{t} + \epsilon_{d}^t (\mathbf{I}[Q_S(t)=N_Q]-D)\Big)^+
\label{eqn:update-sd}\\
 \gamma_{S,p}^{t+1} =& \Big( \gamma_{S,p}^{t}
+ \epsilon_p^t \big (\sum_{N_{SR}=1}^{N_{\min}}
I_{S,m}^{N_{SR}}(t)p_{S,m}(N_{SR},t)-P_S
\big)\Big)^+\label{eqn:update-sp}\\
 \gamma_{m,p}^{t+1} =& \Big(
\gamma_{m,p}^{t} + \epsilon_p^t \big (\sum_{N_{SR}=1}^{N_{\min}}
I_{m,D}^{N_{RD}}(t)p_{m,D}(N_{RD},t)-P_R \big)\Big)^+ ,\ m=1,2,...,M
\label{eqn:update-rp}
\end{align}
where $I_{S,m}^{N_{SR}}(t)=\mathbf I [m=m^*(t)]\mathbf
I[N_{SR}=N_{SR}^*(t)]$, $I_{m,D}^{N_{RD}}(t)=\mathbf I
[m=n^*(t)]\mathbf I[N_{RD}=N_{RD}^*(t)]$, and
\textcolor{black}{$\{\epsilon^t_v>0\}$, $\{\epsilon^t_d>0\}$
$\{\epsilon_p^t>0\}$ are the step size sequences satisfying
\begin{align}
\sum_{t=0}^{\infty} \epsilon^t_{v} = \infty, \sum_{t=0}^{\infty}
\epsilon^t_p = \infty, \sum_{t=0}^{\infty} \epsilon^t_{d} = \infty,
\sum_{t=0}^{\infty}
\bigg[(\epsilon^t_{v})^2+(\epsilon^t_p)^2+(\epsilon^t_{d})^2\bigg]<\infty,
\lim_{t\rightarrow +\infty}
\frac{\epsilon^t_p}{\epsilon^t_{v}}=0,\lim_{t\rightarrow +\infty}
\frac{\epsilon^t_{d}}{\epsilon^t_v}=0.\nonumber
\end{align}}
\label{alg:update}
\end{Algorithm}

\subsection{Almost-Sure Convergence of Distributive Stochastic Learning}

In this section, we shall establish technical conditions for the
almost-sure convergence of the online distributive learning
algorithm. Since $\{\epsilon_v^t\}$, $\{\epsilon_p^t\}$,
$\{\epsilon_d^t\}$ satisfy $\epsilon_p^t=\mathbf{o}(\epsilon_v^t)$,
$\epsilon_{d}^t=\mathbf{o}(\epsilon_p^t)$, the LMs update and the
per-node potential functions update are done simultaneously but over
two different time scales. During the per-node potential functions
update (timescale I), we have $\gamma^{t+1}_p - \gamma^{t+1}_p =
\mathcal{O}(\epsilon_p^t) = \mathbf{o}(\epsilon_v^t)$ and
$\gamma^{t+1}_{S,d} - \gamma^{t+1}_{S,d} = \mathcal{O}(\epsilon_d^t)
= \mathbf{o}(\epsilon_v^t)$. Therefore, the LMs appear to be
quasi-static \cite{Borkar:08} during the per-node value function
update in (\ref{eqn:update-v}). For the notation convenience, define
the sequences of matrices $\{\mathbf{A}^t\}$ and $\{\mathbf{B}^t\}$
as $\mathbf{A}^{t-1}=(1-\epsilon_v^{t-1})\mathbf{I} +
\mathbf{M}^{-1} \mathbf{P}(\Pi^{t}) \mathbf{M} \epsilon_v^{t-1}$ and
$\mathbf{B}^{t-1}=(1-\epsilon_v^{t-1})\mathbf{I} + \mathbf{M}^{-1}
\mathbf{P}(\Pi^{t-1}) \mathbf{M} \epsilon_v^{t-1}$,
where $\Pi^t$ is a unichain system control policy at the $t$-th
frame, $\mathbf{P}(\Pi^{t})$ is the transition matrix of system
states given the unichain system control policy $\Pi^t$,
$\mathbf{I}$ is identity matrix. The convergence property of the
per-node value function update is given below:

\begin{Lemma}[Convergence of Per-Node Value Function Learning over Timescale I]
Assume for all the feasible policy in the policy space, there exists
some positive integer $\beta$ and $\tau^{\beta}>0$ such that
\begin{equation}
[\mathbf{A}^{\beta-1}...\mathbf{A}^1]_{(a,I)} \geq \tau^{\beta},
\quad [\mathbf{B}^{\beta-1}...\mathbf{B}^1]_{(a,I)} \geq
\tau^{\beta} \quad \forall a,\label{eqn:con_matrix}
\end{equation}
where $[\cdot]_{(a,I)}$ denotes the element in $a$-th row and $I$-th
column (where $I$ corresponds to the reference state $\mathbf Q^I$)
and $\tau^t=\mathcal{O}(\epsilon_v^t)$ ($\forall t$). The following
statements are true:
\begin{itemize}
\item The update of the parameter vector (or per-node potential vector) will converge almost surely for any given initial parameter vector $\mathbf{W}^0$ and LMs $\gamma$, i.e.
$ \lim\limits_{t\rightarrow \infty} \mathbf{W}^t(\gamma) =
\mathbf{W}^{\infty}(\gamma)$.

\item The steady state parameter vector $\mathbf{W}^{\infty}$ satisfies:
\begin{equation}
\theta \mathbf{e} + \mathbf{W}^{\infty}(\gamma) = \mathbf{M}^{-1}
\mathbf{T}\big(\gamma, \mathbf{M} \mathbf{W}^{\infty}(\gamma)\big)
\label{eqn:Tmap}
\end{equation}
where $\theta$ is a constant, $\mathbf{W}^{\infty}$ is given by
\begin{equation}
\mathbf{W}^{\infty} =  \big[
\widetilde{V}^{\infty}_S(0),...,\widetilde{V}^{\infty}_S(N_Q),\widetilde{V}^{\infty}_1(0),...,\widetilde{V}^{\infty}_1(N_Q),...,\widetilde{V}^{\infty}_S(N_Q),\widetilde{V}^{\infty}_M(0),...,\widetilde{V}^{\infty}_M(N_Q)
\big]^T, \nonumber
\end{equation}
and the mapping $\mathbf{T}$ is defined as $\mathbf{T}(\gamma,
\mathbf{V}) = \min_{\Pi}[\overline{\mathbf{g}}(\gamma,\Pi) +
\mathbf{P}(\Pi) \mathbf{V}]$.
\end{itemize}
\label{lem:conv-I}
\end{Lemma}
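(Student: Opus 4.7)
The plan is to apply two-timescale stochastic approximation theory in the spirit of Borkar. Because $\epsilon_p^t, \epsilon_d^t = \mathbf{o}(\epsilon_v^t)$, on the fast timescale of the per-node value update the Lagrange multipliers $\gamma$ appear frozen, so I may treat $\gamma$ as a constant parameter throughout. My first step is to rewrite the scalar update in (\ref{eqn:update-v}) in vector form. Noting that at each frame exactly one component of $\mathbf{W}$ (the one indexed by the currently visited representative state $\beta_{m,q}$, if any) is updated, and that by Lemma \ref{lem:decom} the quantity $\gamma_{S,d}\mathbf{I}[Q_S(t)=N_Q]+q+B^*_{n^*(t)}(t)$ is precisely the realization of the per-stage cost plus the minimized continuation value inside the R.H.S.\ of (\ref{eqn:bellman}), the update may be expressed compactly as
\[
\mathbf{W}^{t+1} = \mathbf{W}^t + \epsilon_v^t \Bigl[ \mathbf{M}^{-1}\bigl( \overline{\mathbf{g}}(\gamma,\Pi^{t+1}) + \mathbf{P}(\Pi^{t+1})\mathbf{M}\mathbf{W}^t \bigr) - \mathbf{W}^t + \delta\mathbf{M}^t \Bigr],
\]
where $\Pi^{t+1}$ is the greedy policy induced by $\mathbf{W}^t$ and $\delta\mathbf{M}^t$ is a bounded martingale difference capturing the random sampling of the visited state and the CSI realization.

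Next I would invoke the ODE method. The associated mean-field ODE is
\[
\dot{\mathbf{W}} = \mathbf{M}^{-1}\mathbf{T}(\gamma,\mathbf{M}\mathbf{W}) - \mathbf{W} - \theta\mathbf{e},
\]
where the $-\theta\mathbf{e}$ term is enforced by the normalization $\widetilde V_m(0)=0$ baked into $\mathbf{M}^{-1}$. Standard Robbins--Monro conditions $\sum_t \epsilon_v^t = \infty$ and $\sum_t (\epsilon_v^t)^2 < \infty$ together with a priori boundedness of the iterates (which follows from the finite buffer $N_Q$ and the finite action set) let me conclude via Kushner--Clark / Borkar that almost surely $\mathbf{W}^t$ asymptotically tracks the ODE trajectory.

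The hard part is proving global asymptotic stability of the equilibrium, because the mapping in the brackets is \emph{not} a contraction in any standard norm: the auction-based $\min$ mixes different policies across iterations, so the iteration is exactly the non-contraction that the introduction flags. This is precisely the role of assumption (\ref{eqn:con_matrix}). I would argue as follows. Writing one step of the linearized update around the current iterate as multiplication by $\mathbf{A}^t$ and, for the target-defining policy, by $\mathbf{B}^t$, the hypothesis $[\mathbf{A}^{\beta-1}\cdots\mathbf{A}^1]_{(a,I)}\ge \tau^\beta$ and $[\mathbf{B}^{\beta-1}\cdots\mathbf{B}^1]_{(a,I)}\ge \tau^\beta$ for all rows $a$ asserts that after $\beta$ composite iterations a uniformly positive fraction of mass from every component is funneled into the column corresponding to the reference state $\mathbf{Q}^I$. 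This is a Doeblin-type minorization condition, which implies that the composed map is a weak contraction in the span-seminorm on the quotient space obtained by modding out the constant direction $\mathbf{e}$. The pinning $\widetilde V_m(0)=0$ removes the null direction, so the equilibrium is unique and globally attracting.

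Combining the Doeblin contraction with standard stochastic approximation arguments then gives almost sure convergence $\mathbf{W}^t \to \mathbf{W}^\infty(\gamma)$; passing to the limit in the mean dynamics yields $\theta\mathbf{e}+\mathbf{W}^\infty = \mathbf{M}^{-1}\mathbf{T}(\gamma,\mathbf{M}\mathbf{W}^\infty)$, which is (\ref{eqn:Tmap}). The main obstacle I expect is the verification that assumption (\ref{eqn:con_matrix}) does indeed translate into the required uniform weak contraction on the span-seminorm uniformly over the (time-varying) sequence of policies generated by the algorithm; in particular I would need that the minorization holds along the actual sample-path of $\Pi^t$, not merely for a fixed policy, which is where the two separate conditions on $\mathbf{A}^t$ and $\mathbf{B}^t$ become essential.
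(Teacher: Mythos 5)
Your outline identifies the right mechanism and is broadly consistent with the paper's Appendix D, but you route the argument through the ODE method whereas the paper stays entirely in discrete time, and the step you defer as ``the main obstacle'' is in fact the entire content of the paper's proof. Concretely, the paper first reduces to the synchronous case, defines the residual $\mathbf{q}^l = \mathbf{M}^{-1}\big[\overline{\mathbf{g}}(\Pi^l)+\mathbf{P}(\Pi^l)\mathbf{M}\mathbf{W}^l - \mathbf{M}\mathbf{W}^l - T_I(\mathbf{M}\mathbf{W}^l)\mathbf{e}\big]$, and shows that once the martingale noise is absorbed the iteration behaves as $\mathbf{W}^{l+1}=\mathbf{W}^{j}+\sum_{i=j}^{l}\epsilon_v^i\mathbf{q}^i$. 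The non-contraction caused by the changing greedy policy is then handled by a two-sided sandwich: because $\Pi^l$ minimizes $\overline{\mathbf{g}}(\Pi)+\mathbf{P}(\Pi)\mathbf{M}\mathbf{W}^l$ while $\Pi^{l-1}$ minimizes the same expression at $\mathbf{W}^{l-1}$, one gets $\mathbf{A}^{l-1}\mathbf{q}^{l-1}+(w_{l-1}-w_l)\mathbf{e} \leq \mathbf{q}^{l} \leq \mathbf{B}^{l-1}\mathbf{q}^{l-1}+(w_{l-1}-w_l)\mathbf{e}$; iterating over $\beta$ steps, using that $\mathbf{A}^{l}\mathbf{e}=\mathbf{B}^{l}\mathbf{e}$ and invoking \refbrk{eqn:con_matrix} yields exactly your Doeblin-to-span-seminorm contraction, $\max\mathbf{q}^l-\min\mathbf{q}^l \leq (1-\tau^l)\big(\max\mathbf{q}^{l-\beta}-\min\mathbf{q}^{l-\beta}\big)$. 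This sandwich is precisely how the minorization is made to hold along the actual sample path of time-varying policies, and it is why the hypothesis carries the two separate conditions on $\mathbf{A}$ and $\mathbf{B}$ that you correctly suspected were essential. One point your write-up glosses over: since $\mathbf{A}^t$ and $\mathbf{B}^t$ are perturbations of the identity of size $\epsilon_v^t$, the contraction factor is only $1-\tau^l$ with $\tau^l=\mathcal{O}(\epsilon_v^l)$, so there is no uniform contraction; the conclusion $\mathbf{q}^l\to 0$ (hence convergence of $\mathbf{W}^l$ and the fixed-point relation \refbrk{eqn:Tmap}) relies on $\sum_t\epsilon_v^t=\infty$ forcing $\prod_{i}(1-\tau^{i\beta})\to 0$. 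Your ODE formulation gives a cleaner high-level picture and would be a legitimate alternative, but as written it asserts rather than establishes the span contraction, so the sandwich argument (or an equivalent) still needs to be supplied to complete the proof.
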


\begin{proof}
\textit{Please refer to Appendix D.}
\end{proof}
\begin{Remark}[Interpretation of the Sufficient Conditions in Lemma \ref{lem:conv-I}]
Note that $A^t$ and $B^t$ are related to the transition probability
of the reference states. Condition (\ref{eqn:con_matrix}) simply
means that there is one reference state accessible from all the
other reference states after some finite number of transition steps.
This is a very mild condition and will be satisfied in most of the
cases in practice.
\end{Remark}

Note that \refbrk{eqn:Tmap} is equivalent to the following Bellman
equation on the representative states $\mathcal S_R$:
\begin{equation}
\theta + \widetilde{V}^{\infty}_m(q) =\min_{\Pi(\beta_{m,q})}\Big\{
\overline{g}\big(\beta_{m,q},\Pi(\beta_{m,q}),\gamma_k\big) +
\sum_{\mathbf{Q}^j}
\Pr\big[\mathbf{Q}^j|\beta_{m,q},\Pi(\beta_{m,q})\big] \sum_{m=S}^M
\widetilde{V}^{\infty}_m(Q^j_m)\Big\}, \quad \forall \beta_{m,q}\in
\mathcal S_R \nonumber.
\end{equation}
Hence, Lemma \ref{lem:conv-I} basically guarantees the proposed
online learning algorithm will converge to the {\em best fit}
parameter vector (per-node potential) satisfying
\eqref{eqn:per-node}. On the other hand, since the ratio of step
sizes satisfies
$\frac{\epsilon_p^t}{\epsilon_{v}^t},\frac{\epsilon_d^t}{\epsilon_{v}^t}\rightarrow
0$ during the LM update (timescale II), the per-node value function
will be updated much faster than the Lagrange multipliers. Hence,
the update of Lagrange multipliers in timescale II will trigger
another update process of the per-node value function in timescale
I. By the Corollary 2.1 of \cite{Borkar:97}, we have
$\lim\limits_{t\rightarrow \infty}
||\widetilde{\mathbf{V}}_m^t-\widetilde{\mathbf{V}}_m^{\infty}(\gamma^t)||=0$
w.p.1. Hence, during the LM updates in \refbrk{eqn:update-rp},
\refbrk{eqn:update-sp} and \refbrk{eqn:update-sd}, the per-node
value function update in \refbrk{eqn:update-v} is seen as almost
equilibrated. Moreover, convergence of the LMs is summarized below.

\begin{Lemma}[Convergence of the LMs over Timescale II]
The iteration on the vector of LMs
$\gamma=[\gamma_{S,d},\gamma_{S,p},\\
\gamma_{1,p},...,\gamma_{M,p}]^T$ converges almost surely to
$\gamma^*=[\gamma^*_{S,d},\gamma^*_{S,p},\gamma^*_{1,p},...,\gamma^*_{M,p}]^T$,
which satisfies the power and packet drop rate constraints in
\refbrk{eqn:s-p-cont},\refbrk{eqn:r-p-cont} and
\refbrk{eqn:d-cont}.\label{lem:conv-II}
\end{Lemma}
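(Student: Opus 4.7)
The plan is to invoke the two-timescale stochastic approximation framework of Borkar to decouple the Lagrange multiplier (LM) iteration from the per-node value function iteration. Because the step sizes satisfy $\epsilon_p^t/\epsilon_v^t\to 0$ and $\epsilon_d^t/\epsilon_v^t\to 0$, Lemma \ref{lem:conv-I} together with Corollary 2.1 of \cite{Borkar:97} already implies that, along the slower LM timescale, the per-node value function vector tracks its equilibrium $\mathbf{W}^\infty(\gamma^t)$ almost surely. Therefore I can rewrite \refbrk{eqn:update-sd}--\refbrk{eqn:update-rp} as a projected stochastic approximation acting on $\gamma$ alone, with the instantaneous driving term evaluated under the distributive policy $\Pi^*(\gamma^t)$ that Lemma \ref{lem:decom} produces from the equilibrated per-node value function $\mathbf{W}^\infty(\gamma^t)$.

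Next I would identify the associated mean-field ODE. Under the Markov chain induced by $\Pi^*(\gamma)$, the conditional expectation of $\mathbf{I}[Q_S(t)=N_Q]-D$ equals $\overline{D}(\Pi^*(\gamma))-D$, and similarly for the power terms; these are precisely the constraint residuals \refbrk{eqn:d-cont}--\refbrk{eqn:r-p-cont}. By the envelope property applied to the unconstrained MDP in \eqref{eqn:lagrange}, these residuals coincide with (sub)gradients of the dual function $G(\gamma)=\min_\Pi L(\Pi,\gamma)$ with respect to the corresponding LM components. Hence the limiting ODE is the projected subgradient ascent
\begin{equation}
\dot{\gamma} \;=\; \bigl[\nabla G(\gamma)\bigr]_+,\qquad \gamma\succeq 0, \nonumber
\end{equation}
where $[\cdot]_+$ denotes the projection onto the nonnegative orthant that enforces the $(\cdot)^+$ operation in the iterates. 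Because $G$ is the pointwise minimum of a family of functions that are linear in $\gamma$, it is concave (and finite on the feasible region under the mild assumptions on power budgets and drop-rate tolerance), so every equilibrium of this projected ODE is a global maximizer of $G$ on $\{\gamma\succeq 0\}$.

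To turn this into almost sure convergence of the discrete iterates I would verify the standard stochastic approximation hypotheses: (i) the martingale difference noise $\mathbf{I}[Q_S(t)=N_Q]-\mathbb{E}_{\pi_{\mathbf S}}^{\Pi^*(\gamma^t)}[\mathbf{I}[Q_S=N_Q]]$ (and its power counterparts) has conditionally bounded second moment thanks to the finite buffer size $N_Q$ and the average power constraints; (ii) the step sizes $\epsilon_p^t,\epsilon_d^t$ satisfy the Robbins--Monro conditions stated in Algorithm \ref{alg:update}; (iii) the iterates remain bounded, which follows because constraint violation drives $\gamma$ downward while the projection prevents it from going negative, giving a natural Lyapunov function $\tfrac12\|\gamma-\gamma^*\|^2$ combined with the concavity of $G$. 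Kushner--Clark's lemma then gives $\gamma^t \to \gamma^*$ a.s., where $\gamma^*$ is a maximizer of the dual, and by the saddle-point property cited before \eqref{eqn:MDP}, $\gamma^*$ together with $\Pi^*(\gamma^*)$ satisfies the primal constraints \refbrk{eqn:d-cont}--\refbrk{eqn:r-p-cont} with complementary slackness.

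The main obstacle is the justification that the slow timescale really sees the equilibrated value function rather than a noisy surrogate: the policy $\Pi^*$ depends discontinuously on $\mathbf{W}$ through the $\arg\min$ in Lemma \ref{lem:decom} (the winner of the two-stage auction can switch abruptly), so the effective drift $\mathbb{E}^{\Pi^*(\gamma)}[\cdot]-\text{constraint}$ is only measurable, not Lipschitz, in $\gamma$. I would handle this by working with the differential inclusion formulation (Benaïm--Hofbauer--Sorin), noting that the set of optimal actions is piecewise convex and the ambiguity occurs on a measure-zero set of $\gamma$'s, so the resulting set-valued drift is upper semicontinuous with convex values. Combined with the boundedness and concavity arguments above, this yields convergence of $\gamma^t$ to the set of KKT points, which is a singleton $\gamma^*$ whenever the dual is strictly concave at its maximizer (and otherwise to any element of the optimal dual set, which still certifies feasibility).
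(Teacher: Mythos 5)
Your proposal is sound in its overall architecture and reaches the right conclusion, but it takes a genuinely different route from the paper. The paper's Appendix E also passes to the mean ODE on the slow timescale, but then establishes convergence by a \emph{perturbation-analysis} argument on the induced Markov chain: it claims $\partial\widetilde{\mathcal{P}}_m(\gamma)/\partial\gamma_{m,p}<0$ together with diagonal dominance of the Jacobian of the constraint-residual map ($|\partial\widetilde{\mathcal{P}}_m/\partial\gamma_{m,p}|\gg|\partial\widetilde{\mathcal{P}}_m/\partial\gamma_{n,p}|$, citing \cite{CaoXiRen:07}), so that each component of the ODE \refbrk{eqn:ODE} monotonically drives its own residual to zero; the drop-rate multiplier $\gamma_{S,d}$ is then handled by a nested version of the same argument. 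You instead interpret the residuals as supergradients of the dual function $G(\gamma)$ and view the ODE as projected (sub)gradient ascent on a concave function, handling the discontinuity of the auction winner in $\gamma$ via a differential-inclusion formulation. Your route buys a cleaner global argument that does not need the diagonal-dominance claim (which the paper asserts rather than proves), and your treatment of the non-Lipschitz drift is more careful than anything in Appendix E; the paper's route buys componentwise statements about \emph{which} equilibrium is reached and avoids invoking dual concavity.

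One step of yours deserves a caveat. The envelope argument identifies the residuals with $\nabla G(\gamma)$ only when the policy generating them is the exact minimizer of $L(\Pi,\gamma)$ over the full policy class. Here the slow timescale sees the policy $\Pi^*(\gamma)$ produced by Lemma \ref{lem:decom} from the \emph{approximate} (linearly parameterized) value function $\mathbf{W}^{\infty}(\gamma)$ satisfying the fixed-point equation \refbrk{eqn:Tmap}, which is not the pointwise Lagrangian minimizer except asymptotically (Theorem \ref{thm:optimal}). Consequently the effective objective being ascended is not guaranteed to be the concave dual $G$, and "every equilibrium is a global dual maximizer" does not follow without an additional argument. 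The paper's perturbation-analysis route sidesteps this because it works directly with monotonicity of the constraint residuals under the actual algorithmic policy. To close your version, you would either need to restrict the concavity claim to the dual of the approximate MDP (i.e., the minimum of $L$ over the class of distributive policies induced by linear value functions, which is still a pointwise minimum of linear functions of $\gamma$ and hence concave), or fall back on the paper's monotonicity claim. With that repair, your argument delivers the same conclusion: $\gamma^t\to\gamma^*$ a.s.\ with the equilibrium characterized by vanishing residuals (or zero multiplier), i.e.\ feasibility of \refbrk{eqn:d-cont}--\refbrk{eqn:r-p-cont} with complementary slackness.
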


\begin{proof}
\textit{Please refer to Appendix E.}
\end{proof}

Based on the above lemmas, we summarized the convergence performance
of the online per-node value functions and LMs learning algorithm in
the following theorem.

\begin{Theorem}[Convergence of Online Learning Algorithm \ref{alg:update}]
For the same conditions as in Lemma \ref{lem:conv-I}, we have
$(\gamma^t,\mathbf{W}^t) \rightarrow
\big(\gamma^*,\mathbf{W}^{\infty}(\gamma^*)\big)$ w.p.1., where
$\big(\gamma^*,\mathbf{W}^{\infty}(\gamma^*)\big)$ satisfies $\theta
\mathbf{e} + \mathbf{W}^{\infty}(\gamma^*) = \mathbf{M}^{-1}
\mathbf{T}\big(\gamma^*, \mathbf{M}
\mathbf{W}^{\infty}(\gamma^*)\big)$ and the average power constraint
(\ref{eqn:s-p-cont},\ref{eqn:r-p-cont}) as well as the average
packet drop rate constraint \refbrk{eqn:d-cont}, where $\mathbf{e}$
is a $(M+1)(N_Q+1) \times 1$ vector with all elements equal to
1.\label{thm:con}
\end{Theorem}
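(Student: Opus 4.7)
The plan is to combine Lemma \ref{lem:conv-I} and Lemma \ref{lem:conv-II} through the two-timescale stochastic approximation framework of Borkar. The key observation is that the step size conditions $\lim_{t\to\infty} \epsilon_p^t/\epsilon_v^t = 0$ and $\lim_{t\to\infty} \epsilon_d^t/\epsilon_v^t = 0$ induce a clean separation of timescales: the per-node value function update in \eqref{eqn:update-v} is the fast component, while the LM updates in \eqref{eqn:update-sd}--\eqref{eqn:update-rp} are the slow component.

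First, I would treat the LM trajectory $\{\gamma^t\}$ as quasi-static from the viewpoint of the value function iteration. Because $\gamma^{t+1} - \gamma^t = \mathcal{O}(\epsilon_p^t + \epsilon_d^t) = \mathbf{o}(\epsilon_v^t)$, a standard argument (e.g., Lemma 1 in Chapter 6 of Borkar's book) shows that along the fast timescale the LMs may be frozen at any limit point and Lemma \ref{lem:conv-I} applies directly. Invoking Corollary 2.1 of \cite{Borkar:97}, I conclude
\[
\lim_{t\to\infty}\bigl\|\mathbf{W}^t - \mathbf{W}^{\infty}(\gamma^t)\bigr\| = 0 \quad \text{w.p.1},
\]
so the per-node value function is asymptotically pinned to the manifold $\{(\gamma,\mathbf{W}^{\infty}(\gamma))\}$ defined implicitly by \eqref{eqn:Tmap}.

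Next, on the slow timescale the LM update can be rewritten as a stochastic approximation driven by the drift computed at the (asymptotically equilibrated) value function $\mathbf{W}^{\infty}(\gamma^t)$ plus a vanishing perturbation coming from $\mathbf{W}^t - \mathbf{W}^{\infty}(\gamma^t)$. Applying Lemma \ref{lem:conv-II} to this reduced (singular) slow dynamics gives $\gamma^t \to \gamma^*$ w.p.1, with $\gamma^*$ satisfying the average power constraints \eqref{eqn:s-p-cont}, \eqref{eqn:r-p-cont} and the packet drop constraint \eqref{eqn:d-cont}. Combining the two convergence statements and using continuity of $\gamma \mapsto \mathbf{W}^{\infty}(\gamma)$ along the trajectory yields $(\gamma^t,\mathbf{W}^t) \to (\gamma^*, \mathbf{W}^{\infty}(\gamma^*))$ w.p.1, and $\mathbf{W}^{\infty}(\gamma^*)$ automatically satisfies $\theta\mathbf{e} + \mathbf{W}^{\infty}(\gamma^*) = \mathbf{M}^{-1}\mathbf{T}(\gamma^*, \mathbf{M}\mathbf{W}^{\infty}(\gamma^*))$ by Lemma \ref{lem:conv-I}.

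The main obstacle is ensuring that the perturbation argument on the slow timescale is rigorously justified, specifically that the discrepancy $\mathbf{W}^t - \mathbf{W}^{\infty}(\gamma^t)$ is both vanishing and compatible with the martingale-plus-drift structure required to invoke the ODE method on $\{\gamma^t\}$. This requires verifying boundedness of $\{\mathbf{W}^t\}$ and $\{\gamma^t\}$, continuity of $\mathbf{W}^{\infty}(\cdot)$ (which follows from the uniqueness granted by the condition \eqref{eqn:con_matrix} in Lemma \ref{lem:conv-I}), and that the limiting slow ODE has a unique globally asymptotically stable equilibrium at $\gamma^*$. These technical pieces are exactly what the appendices for Lemmas \ref{lem:conv-I} and \ref{lem:conv-II} provide, so the theorem follows by packaging them through the two-timescale theorem of Borkar.
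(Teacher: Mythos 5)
Your proposal is correct and follows essentially the same route as the paper: the paper also establishes Theorem \ref{thm:con} by combining Lemma \ref{lem:conv-I} (fast timescale, value functions with quasi-static LMs) and Lemma \ref{lem:conv-II} (slow timescale, LMs with equilibrated value functions) via the two-timescale stochastic approximation argument, invoking the same Corollary 2.1 of \cite{Borkar:97} to show $\|\mathbf{W}^t - \mathbf{W}^{\infty}(\gamma^t)\| \to 0$ w.p.1. Your write-up is, if anything, slightly more explicit than the paper about the technical prerequisites (boundedness, continuity of $\gamma \mapsto \mathbf{W}^{\infty}(\gamma)$, stability of the slow ODE), which the paper leaves implicit in the two lemmas.
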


\subsection{Asymptotic Optimality}

Finally, we shall show that the performance of the distributive
algorithm is asymptotically global optimal for high traffic loading.

\begin{Theorem}[Asymptotically Global Optimal at High Traffic Loading] For sufficiently large $N_Q$ and high traffic loading
such that the optimization problem in Problem \ref{prob:main} is
feasible, the performance of the proposed distributive control
algorithm is asymptotically global optimal. \label{thm:optimal}
\end{Theorem}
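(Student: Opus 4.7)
The plan is to show that under heavy traffic loading with $N_Q$ sufficiently large, the global value function $V^*(\mathbf{Q})$ solving the equivalent Bellman equation \eqref{eqn:bellman} becomes asymptotically separable in the form $V^*(\mathbf{Q}) = \sum_{m=S}^{M}\widetilde{V}_m^*(Q_m) + o(1)$, so that the per-node approximation architecture in \eqref{eqn:per-node} is asymptotically exact. Combining this with the convergence result in Theorem \ref{thm:con} then yields the asymptotic global optimality.

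First, I would establish that in the heavy traffic regime (large $\lambda_S$ and $\frac{N_Q}{\lambda_S}$ large enough for feasibility with small drop probability $D$), the steady-state distribution $\pi_{\mathbf{S}}$ concentrates in the interior of the buffer state space, i.e.\ $\Pr[Q_m = 0] \to 0$ and $\Pr[Q_m = N_Q] \to 0$ for every $m$. This is precisely the regime in which the closed-form multi-level water-filling policy \eqref{eqn:psm}--\eqref{eqn:pmd} in Lemma \ref{lem:decom} is exact, because the finite-buffer projections $[\cdot]^+$ and $\min\{\cdot,N_Q\}$ in the queue update rules become inactive with overwhelming probability. In this interior region, the transition kernel \eqref{eqn:queue-tran} factorizes across components: the source queue update depends only on $(Q_S, R_{S,m^*})$, the Rx-RS queue update only on $(Q_{m^*}, R_{S,m^*})$, the Tx-RS queue update only on $(Q_{n^*}, R_{n^*,D})$, and all other RS queues remain unchanged.

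Second, I would analyze the Bellman equation \eqref{eqn:bellman} on the interior set under this factorization. The per-stage cost $\overline{g}$ is already additively separable in $\{Q_m\}$ by construction. What needs to be argued is that the minimization on the R.H.S.\ of \eqref{eqn:bellman} preserves separability. Substituting the ansatz $V(\mathbf{Q}) = \sum_{m=S}^{M}\widetilde{V}_m(Q_m)$ and using Lemma \ref{lem:decom}, the continuation cost reduces to $\mathbf{E}_{\mathbf{H}}[G_{S,m^*}(N_{SR}^*,p_{S,m^*}^*) + G_{n^*,D}(N_{RD}^*,p_{n^*,D}^*)] + \text{terms independent of }\Pi(\mathbf{Q})$. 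The key observation is that this objective is additively decoupled between the S-R decision variables $(m,N_{SR},p_{S,m})$ and the R-D decision variables $(n,N_{RD},p_{n,D})$ except for the half-duplex constraint $m \neq n$ and the stream-sharing constraint $N_{RD} = \min(N_T, N_R - N_{SR})$, both of which are precisely the constraints that the two-stage two-winner auction of Lemma \ref{lem:decom} handles optimally in a distributive manner. Therefore, fixing any $\widetilde{V}_m(\cdot)$, the resulting Bellman operator maps separable value functions to separable value functions on the interior, and a fixed point $\{\widetilde{V}_m^*\}$ exists and satisfies the component-wise Bellman equations on the representative states $\mathcal{Q}_R$, which is exactly the fixed point \eqref{eqn:Tmap} that the online learning converges to by Lemma \ref{lem:conv-I}.

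Third, I would close the loop by bounding the residual $\epsilon(\mathbf{Q}) := V^*(\mathbf{Q}) - \sum_m \widetilde{V}_m^*(Q_m)$ and showing $\mathbf{E}_{\pi_{\mathbf{S}}}[|\epsilon(\mathbf{Q})|] \to 0$ in the heavy traffic limit. The residual arises only from the boundary events $\{Q_m \in \{0,N_Q\}\}$, whose stationary probabilities vanish by the first step. A Lyapunov-drift argument on $\sum_m Q_m$ together with the average-power/drop-rate constraints then yields an explicit $o(1)$ bound. Given this, the policy $\Pi^*$ returned by the distributive auction of Lemma \ref{lem:decom} with the learned parameters $\mathbf{W}^{\infty}(\gamma^*)$ achieves the same average cost as the centralized optimum up to an $o(1)$ term, proving the claim. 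The main obstacle is the rigorous control of the residual $\epsilon(\mathbf{Q})$: because the auction tie-breaking and the shared arrival process still couple the nodes, one must show that the coupling contributions to the Bellman recursion are of lower order than the separable bulk terms; this likely requires a fluid-scale argument on $\mathbf{Q}/N_Q$ and a careful interchange of the limits $N_Q \to \infty$ and $t \to \infty$ along the trajectories of the learning algorithm.
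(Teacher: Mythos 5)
Your high-level goal --- show that the separable ansatz $V(\mathbf Q)=\sum_m\widetilde V_m(Q_m)$ becomes an asymptotically exact solution of the Bellman equation \eqref{eqn:bellman} in heavy traffic, then invoke Theorem \ref{thm:con} --- matches the paper. But the mechanism you propose for why separability holds is not the one that works, and the step you state as the crux is false as written. You claim that, on the interior of the buffer space, ``the resulting Bellman operator maps separable value functions to separable value functions.'' It does not: the R.H.S.\ of \eqref{eqn:bellman} contains $\min_{\Pi(\mathbf Q)}\mathbf E_{\mathbf H}[\sum I_{S,m}^{N_{SR}}G_{S,m}+\sum I_{n,D}^{N_{RD}}G_{n,D}]$, and the RS-selection indicators $I^{N_{SR}}_{S,m}, I^{N_{RD}}_{n,D}$ are chosen jointly as a function of \emph{all} the increments $\widetilde V_m(Q_m+r)-\widetilde V_m(Q_m)$, $m=S,1,\dots,M$. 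A minimum over a shared discrete choice of a sum of per-queue terms is not an additively separable function of $(Q_S,Q_1,\dots,Q_M)$; this coupling has nothing to do with the boundary projections $[\cdot]^+$ and $\min\{\cdot,N_Q\}$, so concentrating the stationary distribution in the interior and factorizing the transition kernel there does not remove it. You do flag this coupling as ``the main obstacle'' at the end, but you leave it to an unspecified fluid-scale argument, which is exactly the missing content.

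The paper's Appendix F resolves it differently: using the polynomial-growth/Taylor-expansion property of $\widetilde V_m$ from the proof of Lemma \ref{lem:decom}, it shows that in heavy traffic the \emph{increments} become asymptotically independent of the base queue length, i.e.\ $\widetilde V_m^{\infty}(q\pm r)-\widetilde V_m^{\infty}(q)=\widetilde V_m^{\infty}(q_I\pm r)-\widetilde V_m^{\infty}(q_I)+\mathcal O(\epsilon)$ for all $q$ above a threshold (equations \eqref{eqn:app-linear-1}--\eqref{eqn:app-linear-3}, with the representative states redefined around an interior point $q_I$ rather than around the all-empty state). With state-independent increments, the entire minimization term becomes a constant independent of $\mathbf Q^i$ up to $\mathcal O(\epsilon)$, so substituting $\sum_m\widetilde V_m^{\infty}(Q_m^i)$ into the R.H.S.\ of \eqref{eqn:bellman} and using the per-component fixed-point equations \eqref{eqn:vs-app}--\eqref{eqn:vm-app} returns $V(\mathbf Q^i)+\text{const}+\mathcal O(\epsilon)$. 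That increment-flatness property is the key idea your proposal lacks; without it (or an equivalent substitute), your second and third steps cannot be completed. Separately, your claim that $\Pr[Q_S=N_Q]\to 0$ automatically under heavy traffic is also suspect --- it is only enforced to be at most $D$ by the drop-rate constraint --- but that is a secondary issue compared with the separability gap.
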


\begin{proof}
\textit{Please refer to Appendix F.}
\end{proof}

\section{Simulations and Discussions} \label{sec:sim}
\begin{figure}[t]
\centering
\includegraphics[height=5.5cm, width=8cm]{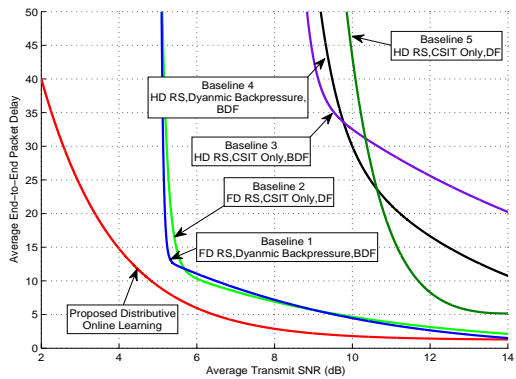}
\caption{Average end-to-end delay versus average transmit SNR.
Baseline 1 refers to the dynamic backpressure algorithm with BDF
protocol and full-duplex relays. Baseline 2 refers to the CSIT only
scheduling with traditional DF protocol and full-duplex relays.
Baseline 3 refers to the CSIT only scheduling with BDF protocol and
half-duplex relays. Baseline 4 refers to the dynamic backpressure
algorithm with BDF protocol and half-duplex relays. Baseline 5
refers to the CSIT only scheduling with traditional DF protocol and
half-duplex relays. The deterministic packet size is $N_b=25$K bits
and the number of antennas at each RS is $N_R=4$. The packet drop
rates of the Baselines 1-5 and the proposed distributive online
learning are 0.2\% 0.2\% 13\%, 3\%, 24\% and 0.2\% respectively.}
\label{Fig:snr}
\end{figure}

\begin{figure}[t]
\centering
\includegraphics[height=5.5cm, width=8cm]{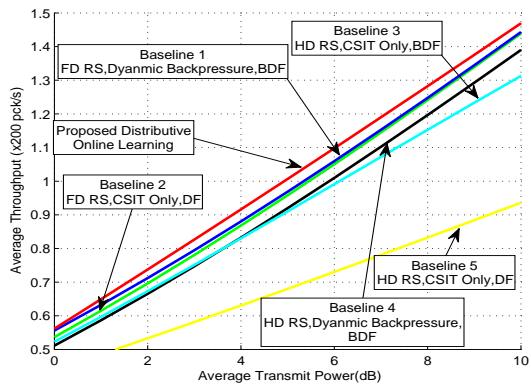}
\caption{Average throughput versus average transmit SNR. The
deterministic packet size is $N_b=30$K bits and the number of
antennas at each RS is $N_R=4$. The packet drop rates of the
Baselines 1-5 and the proposed distributive online learning are all
10\%.} \label{Fig:tp}
\end{figure}

\begin{figure}[t]
\centering
\includegraphics[height=6cm, width=8cm]{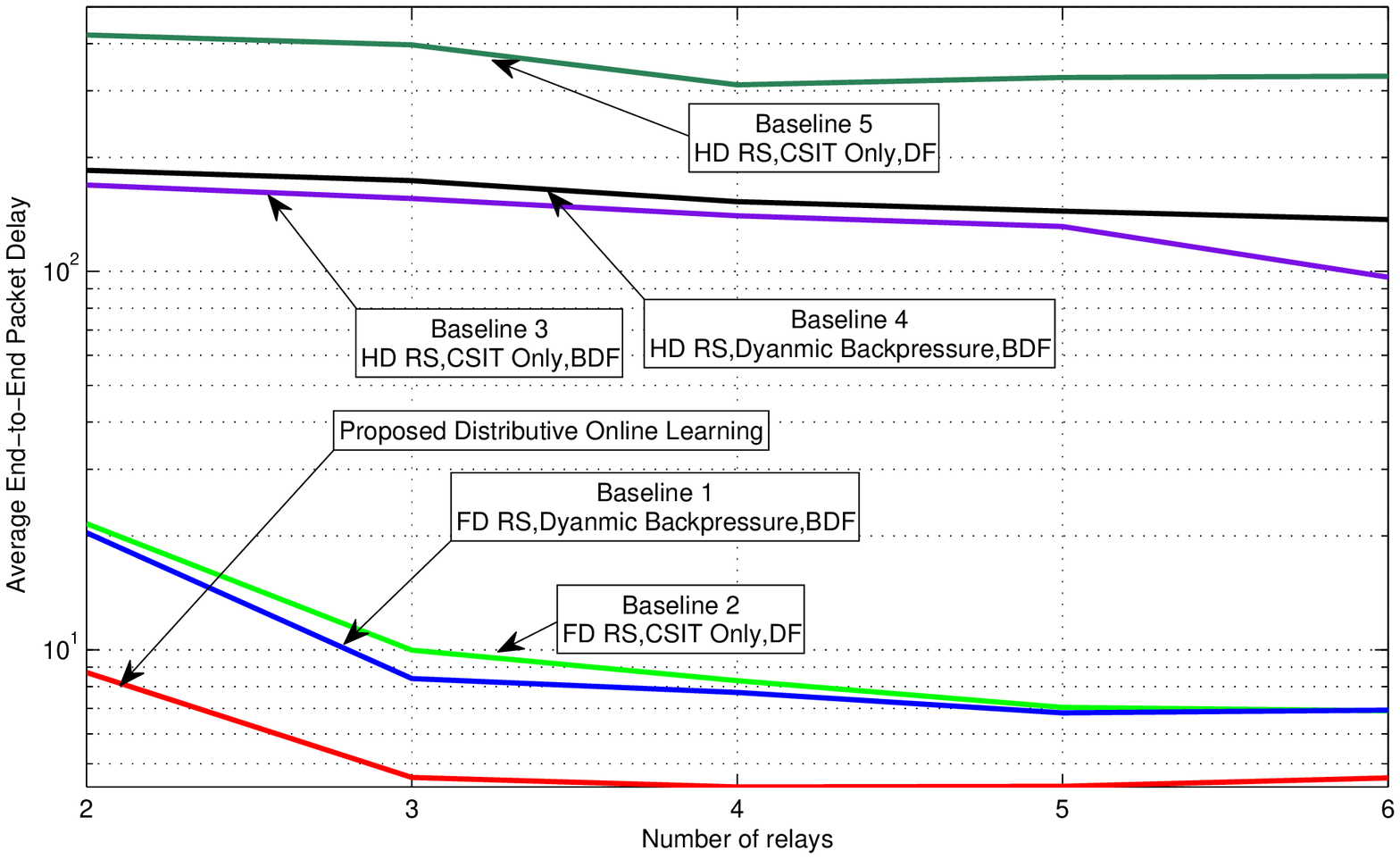}
\caption{Average end-to-end delay versus the number of relays with
transmit SNR = $5.5dB$. The deterministic packet size is $N_b=25$K
bits and the number of antennas at each RS is $N_R=4$. The packet
drop rates of the Baseline 1-5 and the proposed distributive online
learning are 23\%, 23\%, 86\%, 82\%, 96\% and 0.5\% respectively.}
\label{Fig:relay}
\end{figure}

\begin{figure}[t]
\centering
\includegraphics[height=6cm, width=8cm]{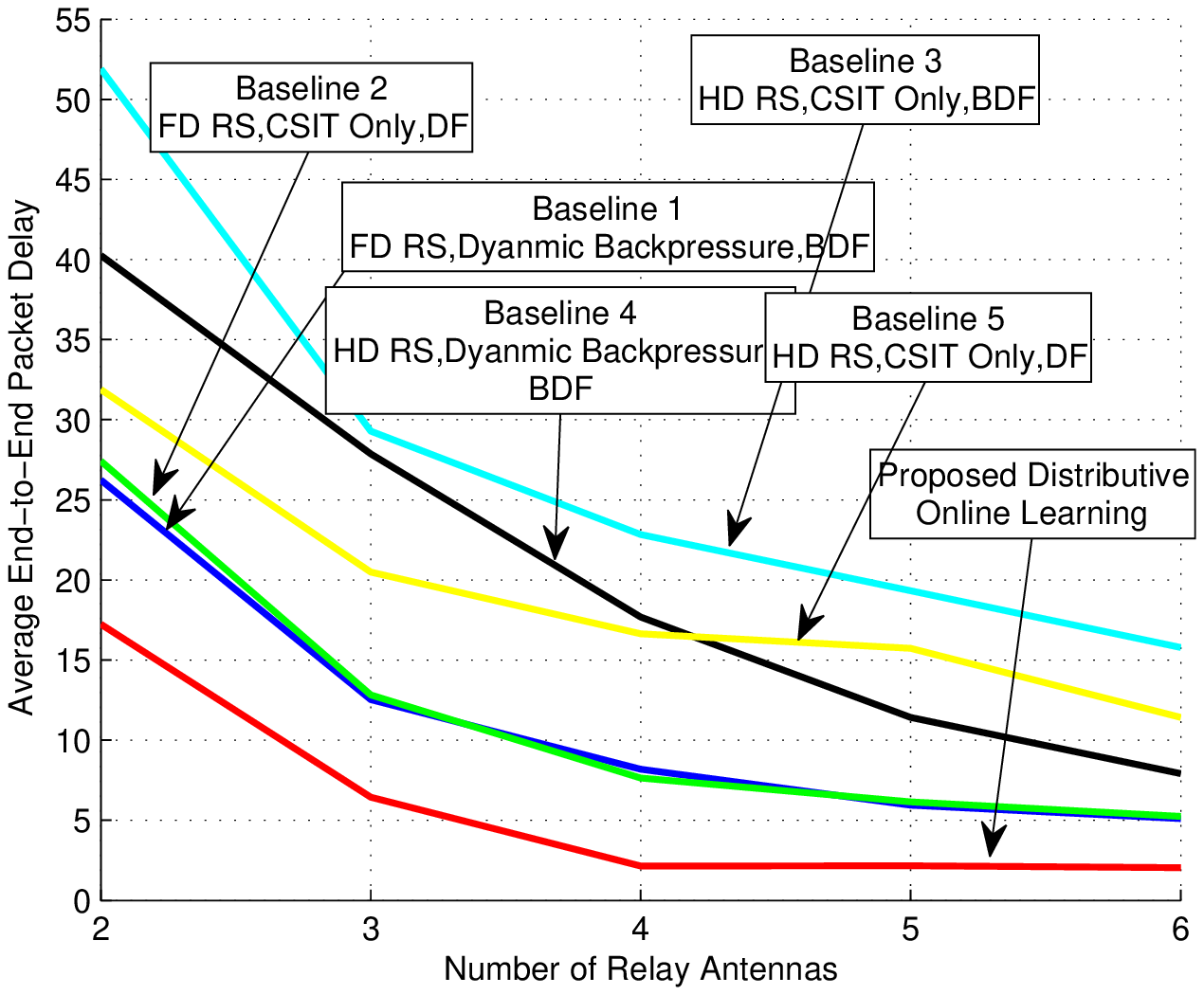}
\caption{Average end-to-end delay versus the number of relay
antennas with transmit SNR = $5dB$. The deterministic packet size is
$N_b=20$K bits and the number of antennas at each RS is $N_R=4$. The
packet drop rates of the Baseline 1-5 and the proposed distributive
online learning are 3\%, 4\%, 9\%, 5\%, 20\% and 0.1\%,
respectively.} \label{Fig:antenna}
\end{figure}

In this section, we shall compare our proposed online per-node value
function learning algorithm to five reference baselines. Baseline 1
and 4 refer to the proposed {\em buffered decode and forward} (BDF)
protocol with {\em throughput optimal policy} (in stability sense),
namely the {\it dynamic backpressure} algorithm
\cite{Georgiadis:book}, where we utilize full-duplex RSs in Baseline
1 and half-duplex RSs in Baseline 4. Baseline 2 and 5 refer to the
regular decode-and-forward protocol (DF) with the {\em CSIT only
scheduling} (the link selection and power allocation are adaptive to
the CSIT only so as to optimize the end-to-end throughput). We
utilize full-duplex RSs in Baseline 2 and half-duplex RSs in
Baseline 5. Moreover, Baseline 3 refers to the proposed BDF protocol
with CSIT only scheduling and half-duplex RSs. In the simulations,
we assume the total bandwidth is 1 MHz, the packet arrival at the
source node is Poisson with average arrival rate
$\lambda_S=200$pck/s and deterministic packet size $N_b$ bits. The
number of antennas at the source node and the destination node is
$N_T=2$. Moreover, the maximum buffer size of each node (source node
and RSs) is $N_Q=10$.

Figure \ref{Fig:snr} and \textcolor{black}{Figure \ref{Fig:tp}}
illustrate the average end-to-end delay and
\textcolor{black}{average throughput} versus average transmit SNR
per node with $N_R=4$ antennas at each RS, respectively. It can be
observed that the proposed distributive algorithm with half-duplex
RS could achieve significant performance gain \textcolor{black}{in
both average delay and average throughput} over all baselines with
full-duplex RSs, and even more significant gain over the baselines
with half-duplex RSs. This illustrates the advantages of the
proposed BDF algorithm with distributive delay-optimal control
policy, which could effectively reduce the intrinsic half-duplex
penalty in the cooperative communication systems.

Figure \ref{Fig:relay} and \textcolor{black}{Figure
\ref{Fig:antenna}} illustrate the average end-to-end delay versus
the number of relays and \textcolor{black}{the number of relay
antennas} with $N_R=4$ antennas at each RS, respectively.
\textcolor{black}{It can be observed that the average delay of all
the schemes decreases as the number of relays or the number of relay
antennas increases. Furthermore,} the proposed BDF algorithm with
distributive delay-optimal control policy has significant gain in
delay over all the baselines.

Figure \ref{Fig:conv} illustrates the convergence property of the
proposed distributive online learning algorithm. We plot the
per-node value function of the first relay versus scheduling slot
index at a transmit SNR= $10$dB. The average delay at the $200$-th
scheduling slot is already very close to the steady-state value,
which is much better than all the baselines. Furthermore, unlike the
iterations in deterministic NUM problems, the proposed algorithm is
online, meaning that normal payload is delivered during the
iteration steps.

\begin{figure}[h]
\centering
\includegraphics[height=6cm, width=8cm]{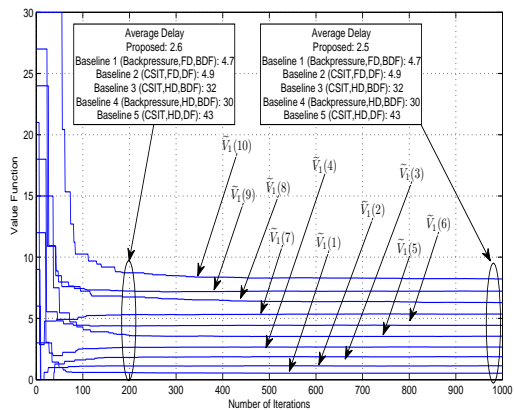}
\caption{Illustration of the convergence of the proposed online
learning algorithm. The instantaneous per-node value function is
plotted versus time slot index for a cooperative MIMO system with a
source node (with 2 antennas) and 2 RS nodes (each with 4 antennas).
The transmit SNR of the source and the RS nodes are 10 dB and the
target packet drop rate is 0.2\%. Unlike the iterations in
deterministic NUM problems, the proposed algorithm is online,
meaning that normal payload is delivered during the iteration
steps.} \label{Fig:conv}
\end{figure}

\section{Summary} \label{sec:sum}

In this paper, we consider queue-aware resource control for two-hop
cooperative MIMO systems. We show that by exploiting  buffering in
each MIMO relay, we could substantially reduce the intrinsic
half-duplex loss in cooperative systems. The  delay-optimal resource
control policy is formulated as an average-cost infinite horizon
Markov Decision Process (MDP). To obtain a low complexity solution,
we approximate the value function by a linear combination of
per-node value functions. The per-node value function is obtained
using a distributive stochastic learning algorithm. We also
established technical conditions for almost-sure convergence and
show that in heavy traffic limit, the proposed low complexity
distributive algorithm converges to global optimal solution.

\section*{Appendix A: Proof of Lemma \ref{lem:obj}}

The average number of bits received by the source node is given by
$\lambda_S (1-D)$, which is also the average number of information
bits received by the relay clusters as the source node and the relay
cluster are cascade. Let $W$, $W_S$ and $W_R$ be the average time
(with the unit of frames) one information bit staying in the system,
the source node's queue and some relay's queue respectively, $N_S$
and $N_R$ be the average number of information bits in the source
node's queue and the relays' queues respectively, we have $ N_S =
(1-D) \lambda_S W_S$ and $N_R = (1-D) \lambda_S W_R$ by Little's
Law. Notice that $W = W_S + W_R$, we have $ W = \frac{N_S +
N_R}{\lambda_S (1-D)}$. Since the change of system queue state forms
a Markov chain, we have $ W =
\mathbf{E}_{\pi_{\kappa}}\Big[\frac{Q_S + \sum_{m=1}^M
Q_m}{\lambda_S(1-D)}\Big]$, where $\pi_{\kappa}$ is the steady state
distribution. For sufficiently small packet drop rate requirement
$1-D \approx 1$, the end to end average delay becomes $W =
\mathbf{E}_{\pi_{\kappa}}\Big[\frac{Q_S + \sum_{m=1}^M
Q_m}{\lambda_S}\Big]$.

\section*{Appendix B: Proof of Lemma \ref{lem:reduce}}

From the Bellman equation of the original state space
(\ref{eqn:bellman}), we have
\begin{align}
\theta + V (\mathbf Q^i,\mathbf H)  = & \min_{\Pi(\mathbf
Q^i,\mathbf H)} \Big\{ g \big((\mathbf Q^i,\mathbf H),\Pi(\mathbf
Q^i,\mathbf H),\gamma\big) +
\sum_{(\mathbf Q^j,\mathbf H')} \Pr \big[(\mathbf Q^j,\mathbf H') | (\mathbf Q^i,\mathbf H), \Pi(\mathbf Q^i,\mathbf H) \big] J (\mathbf Q^j,\mathbf H') \Big\} \nonumber\\
\mathrel{\mathop=\limits^{(a)}}& \min_{\Pi(\mathbf Q^i,\mathbf H)}
\Big\{ g \big((\mathbf Q^i,\mathbf H),\Pi(\mathbf Q^i,\mathbf
H),\gamma\big) + \sum_{\mathbf{Q}^j} \Pr \big[ \mathbf{Q}^j |
(\mathbf Q^i,\mathbf H), \Pi(\mathbf Q^i,\mathbf H) \big] V
(\mathbf{Q}^j) \Big\} \label{eqn:org},
\end{align}
where (a) is due to the definition $V(\mathbf{Q}^j) =
\mathbf{E}_{\mathbf{H}'} [V(\mathbf{Q}^j,\mathbf{H}') |
\mathbf{Q}^j]$, and the optimal control actions are given by
$\Pi^*(\mathbf Q^i,\mathbf H) = \arg \min_{\Pi(\mathbf Q^i,\mathbf
H)} \Big\{ g \big((\mathbf Q^i,\mathbf H),\Pi(\mathbf Q^i,\mathbf
H),\gamma\big) + \sum_{\mathbf{Q}^j} \Pr \big[ \mathbf{Q}^j |
(\mathbf Q^i,\mathbf H), \Pi(\mathbf Q^i,\mathbf H) \big] V
(\mathbf{Q}^j) \Big\}$. Thus,
by the partitioning of the optimal control  actions in Definition
\ref{def:policy}, i.e. $ \Pi^*(\mathbf{Q}^i) = \{
\Pi^*(\mathbf{Q}^i, \mathbf H) | \forall \mathbf H \}$,
\begin{align}
\Pi^* (\mathbf{Q}^i)  =   \arg \min_{\Pi(\mathbf{Q}^i)}
\sum_{\mathbf{H}} \Pr (\mathbf{H}) \Big\{ g \big((\mathbf
Q^i,\mathbf H),\Pi(\mathbf Q^i,\mathbf H),\gamma\big) +
\sum_{\mathbf{Q}^j} \Pr \big[ \mathbf{Q}^j | (\mathbf Q^i,\mathbf
H), \Pi(\mathbf Q^i,\mathbf H) \big] V (\mathbf{Q}^j) \Big\}
\label{eqn:opt-pi}
\end{align}
From (\ref{eqn:org}) and (\ref{eqn:opt-pi}), we have $\theta + \Pr
(\mathbf{H}) V (\mathbf Q^i,\mathbf H) = \min_{ \Pi(\mathbf{Q}^i)}
\sum_{\mathbf{H}} \Pr (\mathbf{H}) \Big\{ g \big((\mathbf
Q^i,\mathbf H),\Pi(\mathbf Q^i,\mathbf H),\gamma\big) +
\sum_{\mathbf{Q}^j} \Pr \big[ \mathbf{Q}^j | (\mathbf Q^i,\mathbf
H), \Pi(\mathbf Q^i,\mathbf H) \big] V (\mathbf{Q}^j) \Big\}
\mathrel{\mathop=\limits^{(b)}}\min_{ \Pi(\mathbf{Q}^i)} \Big\{
\overline{g} \big(\mathbf{Q}^i,  \Pi(\mathbf Q^i),\gamma\big) +
\sum_{\mathbf{Q}^j} \Pr \big[ \mathbf{Q}^j | \mathbf{Q}^i,
 \Pi (\mathbf{Q}^i) \big] V (\mathbf{Q}^j) \Big\}$,
where the equality (b) is due to the definition of $\overline{g}$ in
(\ref{eqn:bar-g}). As a result, the control policy obtained by
solving (\ref{eqn:bellman})  is the same as that obtained by solving
(\ref{eqn:MDP})  and this completes the proof.

%

\section*{Appendix C: Proof of Lemma \ref{lem:decom}}

We shall prove the general control policy first, followed by the
closed-form power control derivation.

According to (\ref{eqn:prob-arg}), given $ N_{SR} $ and $ N_{RD} $,
the optimal power control is given by:
\begin{align}
&\min_{\{ \Pi_p^m \}} \mathbf{E}_{\mathbf{H}} \Big[
\sum_{m,{N_{SR}}} I_{S,m}^{N_{SR}}G_{S,m}(N_{SR},p_{S,m}) +
\sum_{n,{N_{RD}}} I_{n,D}^{N_{RD}} G_{n,D}(N_{RD},p_{n,D})
    \Big]\Big\} \nonumber\\
=& \mathbf{E}_{\mathbf{H}} \Big[ \sum_{m,{N_{SR}}} I_{S,m}^{N_{SR}}
\min_{p_{S,m}} G_{S,m}(N_{SR},p_{S,m}) + \sum_{n,{N_{RD}}}
I_{n,D}^{N_{RD}} \min_{p_{m,D}} G_{n,D}(N_{RD},p_{n,D})
    \Big]\Big\} \nonumber
\end{align}
Therefore, $p_{S,m}^*(N_{SR}) = \arg\min_{p_{S,m}}
G_{S,m}(N_{SR},p_{S,m})$ and $ p_{m,D}^*(N_{SR})=\arg\min_{p_{m,D}}
G_{n,D}(N_{RD},p_{n,D})$. To determine the optimal Rx-RS, Tx-RS and
stream allocation, the biding is divided into  two stages:
\begin{itemize}
\item First Biding: Each RS (say the $ m $-th RS) broadcasts one bid for each possible $ N_{SR} $ indicating that if itself is selected as Rx-RS and the number of S-R streams is $N_{SR}$, what would be the corresponding $ G_{S,m}(N_{SR},p_{S,m}^*) $.
\item Second Biding: After receiving the bids in the first round, each RS (say the $ n $-th RS) should calculate that if itself is selected as the Tx-RS, which RS else is the best Rx-RS (say the $ m $-th RS is the best Rx-RS), what's the best $ N_{SR} $ and $ N_{RD} $ and what's the corresponding $ B_n^*=G_{S,m}(N_{SR},p_{S,m}^*) + G_{n,D}(N_{RD},p_{n,D}^*)$. Then, broadcast the calculation results $ B_n^* $ as the second bid.
\item After comparing the $ B_n^* $, the optimal  Rx-RS, Tx-RS and stream allocation can be determined.
\end{itemize}
Therefore, the first-stage bidding and the second-stage bidding is
straight-forward.

When $\lambda_S$ and  $\frac{N_Q}{\lambda}$ ($m = S,1,2,...,M$) are
sufficiently large, it with large probability that
$\frac{Q_m}{\lambda}$ ($m = S,1,2,...,M$) is sufficiently large.
Hence, following a similar approach in \cite{Bettesh:06}, it can be
proved that the value function $\widetilde{V}_m$ ($m = S,1,2,...,M$)
is increasing polynomially in $Q = [Q_S,Q_1,...,Q_M]^T$. The
optimization on $ p_{S,m} $ is given by
\begin{align}
& p_{S,m}^*(N_{SR})= \arg\min_{p_{S,m}} G_{S,m}(N_{SR},p_{S,m}) \nonumber \\
=& \arg\min_{p_{S,m}} \bigg\{\gamma_{S,p} p_{S,m} + \sum_n f_X(n) \Big( \widetilde{V}_S\big(Q_S^i - R_{S,m}(N_{SR},p_{S,m}) + n\big) -  \widetilde{V}_S (Q_S^i+n) \Big) \nonumber\\
& + \widetilde{V}_m\big(Q_m^i + R_{S,m}(N_{SR},p_{S,m})\big)  -
\widetilde{V}_m\big(Q_m^i\big) \bigg\}.\label{eqn:temp-l3}
\end{align}
Similar to \cite{Bettesh:06}, we can do Taylor expansion as follows:
\begin{equation}
\widetilde{V}_S\big(Q_S^i - R_{S,m}(N_{SR},p_{S,m}) + n\big) =
\widetilde{V}_S\big(Q_S^i\big) + \bigg(n- R_{S,m}(N_{SR},p_{S,m})
\bigg)\widetilde{V}_S^{'}\big(Q_S^i \big), \label{eqn:temp-l3-1}
\end{equation}
\begin{equation}
\widetilde{V}_S (Q_S^i+n) = \widetilde{V}_S\big(Q_S^i\big) + n
\widetilde{V}_S^{'}\big(Q_S^i\big) \label{eqn:temp-l3-2}
\end{equation}
where $ \widetilde{V}_S^{'} $ is the first order derivative on
$\widetilde{V}_S$ and the higher order is neglectable. Same apporach
can be used to expand $ \widetilde{V}_m\big(Q_m^i +
R_{S,m}(N_{SR},p_{S,m})\big) $ as $\widetilde{V}_m\big(Q_m^i +
R_{S,m}(N_{SR},p_{S,m})\big) = \widetilde{V}_m\big(Q_m^i\big) +
R_{S,m}(N_{SR},p_{S,m}) \widetilde{V}_m^{'}\big(Q_m^i\big)$.
At high SNR region, we have
\begin{equation}
\frac{\partial R_{S,m}(N_{SR},p_{S,m})}{\partial p_{S,m}} =
\frac{N}{\ln 2}\frac{1}{p_{S,m}+\sum_{j=1}^{N_{SR}}
\frac{1}{\eta_{S,m}^j}}. \label{eqn:temp-l3-3}
\end{equation}
According to
(\ref{eqn:temp-l3-1},\ref{eqn:temp-l3-2},\ref{eqn:temp-l3-3}),
taking derivative on the RHS of (\ref{eqn:temp-l3}) and letting it
be zero, we can get the closed-from expression for power allocation
in (\ref{eqn:psm}). Moreover, (\ref{eqn:pmd}) can be proved in the
same way. Finally, when $ Q_m $ and $ Q_S $ are sufficiently large,
according to the definition of derivative, we have
\begin{equation}
\widetilde{V}_S^{'}(Q_S^i) = \frac{\widetilde{V}_S(Q_S^i+1) -
\widetilde{V}_S(Q_S^i-1)}{2} \qquad \widetilde{V}_m^{'}(Q_m^i) =
\frac{\widetilde{V}_m(Q_m^i+1) - \widetilde{V}_m(Q_m^i-1)}{2}
\nonumber.
\end{equation}

\section*{Appendix D: Proof of Lemma \ref{lem:conv-I}}

From \cite{Borkar:98}, the convergence property of the asynchronous
update and synchronous update is the same. Therefore, we consider
the convergence of related synchronous version without loss of
generality.

Let $c\in R$ be a constant, we have $T_I(c \widetilde{V}_S^l) = c
T_I(\widetilde{V}_S^l)$, where $T_I$ is one element of mapping
$\mathbf{T}$ corresponding to the state with all buffers empty.
Similar to \cite{Borkar:00}, the per-node value function
$\{\widetilde{\mathbf{V}}_m\}$ is bounded almost surely during the
iterations of algorithm. According to the construction of parameter
vector $\mathbf{W}$, the update on $\widetilde{\mathbf{V}}_m$ is
equivalent to the update on $\mathbf{W}$ and proving the convergence
of Lemma \ref{lem:conv-I} is equivalent to proving the convergence
of update on $\mathbf{W}$. In the following, we first introduce and
prove the following lemma on the convergence of learning noise.

\begin{Lemma}Define $ \mathbf{q}^l = \mathbf{M}^{\dag} \Big[
\overline{\mathbf{g}}(\Pi_l) + \mathbf{P}(\Pi_l)\mathbf{M}
\mathbf{W}^l - \mathbf{M} \mathbf{W}^l - T_{I}(\mathbf{M}
\mathbf{W}^l) \mathbf{e}\Big]$, when the number of iterations $l
\geq j \rightarrow \infty$, the procedure of update can be written
as follows with probability 1: $\mathbf{W}^{l+1} = \mathbf{W}^{j} +
\sum_{i=j}^l \epsilon_v^i \mathbf{q}_m^i$. \label{lem:noise}
\end{Lemma}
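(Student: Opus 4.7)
The plan is to recognize (\ref{eqn:update-v}) as a Robbins--Monro stochastic approximation recursion of the form $\mathbf{W}^{l+1} = \mathbf{W}^l + \epsilon_v^l(\mathbf{q}^l + \xi^l)$, with $\mathbf{q}^l$ the prescribed conditional-mean drift and $\xi^l$ a martingale-difference learning noise, and then to invoke a standard martingale argument to show that the tail sum $\sum_{i=j}^l \epsilon_v^i \xi^i$ becomes uniformly negligible in $l\ge j$ once $j$ is large. First I would stack the scalar updates (\ref{eqn:update-v}) into the parameter vector $\mathbf{W}^l$, using that the indicator $\mathbf{I}[\mathbf{Q}(l)=\beta_{m,q}]$ is precisely the action of the $(\beta_{m,q},\cdot)$ row of $\mathbf{M}^{\dag}$. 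Taking conditional expectation over $\mathbf{H}(l)$ given $\mathcal{F}_{l-1}:=\sigma(\mathbf{W}^0,\mathbf{S}(0),\ldots,\mathbf{S}(l-1),X(l-1))$, the bid $B^*_{n^*(l)}(l)$ averages out via Lemma \ref{lem:decom} into $\overline{\mathbf{g}}(\Pi^l)+\mathbf{P}(\Pi^l)\mathbf{M}\mathbf{W}^l$ evaluated at the representative states, while the $-\widetilde V_m^l(q)$ term together with $\gamma_{S,d}\mathbf{I}[Q_S=N_Q]$ contributes the $-\mathbf{M}\mathbf{W}^l$ piece, and the $T_I(\mathbf{M}\mathbf{W}^l)\mathbf{e}$ offset arises as the standard normalization pinning the all-empty reference state $\mathbf{Q}^I$. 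This reproduces exactly the formula for $\mathbf{q}^l$ stated in the lemma.

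Given this identification, $\xi^l:=(\text{instantaneous direction})-\mathbf{q}^l$ is by construction a martingale difference with respect to $\{\mathcal{F}_l\}$, and the next step is to verify an a.s.\ $L^2$ bound $\mathbb{E}[\|\xi^l\|^2 \mid \mathcal{F}_{l-1}]\le K$. This follows from combining three ingredients that are already available: (i) the a.s.\ boundedness of $\{\widetilde{\mathbf{V}}_m^l\}$ recorded at the start of the proof of Lemma \ref{lem:conv-I} (following \cite{Borkar:00}); (ii) the finite QSI alphabet $\{0,\ldots,N_Q\}$, which bounds the terms $q$ and $\mathbf{I}[Q_S=N_Q]$; and (iii) the water-filling closed form (\ref{eqn:psm})--(\ref{eqn:pmd}) together with the quasi-static LMs on timescale I, which bounds $B^*_{n^*(l)}(l)$ via $\overline{\mathbf{g}}$. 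Setting $M^l:=\sum_{i=0}^{l-1}\epsilon_v^i\xi^i$ and using the step-size condition $\sum_t(\epsilon_v^t)^2<\infty$, one obtains $\sum_i(\epsilon_v^i)^2 \mathbb{E}[\|\xi^i\|^2] < \infty$, so $\{M^l\}$ has bounded quadratic variation and Doob's $L^2$ martingale convergence theorem yields $M^l\to M^\infty$ almost surely with $M^\infty$ finite. The Cauchy property of this limit then produces, on a probability-one event, an index $j=j(\omega,\delta)$ with $\|\sum_{i=j}^l \epsilon_v^i \xi^i\| < \delta$ for all $l\ge j$; absorbing this vanishing remainder into the deterministic part collapses the recursion to $\mathbf{W}^{l+1} = \mathbf{W}^{j} + \sum_{i=j}^l \epsilon_v^i \mathbf{q}^i$ in the asymptotic regime $l\ge j\to\infty$, which is the statement of the lemma.

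The main obstacle will be the bookkeeping in the first paragraph: because the auction-determined bid $B^*_{n^*(l)}(l)$ already hides a joint minimization over $(m,n,N_{SR},N_{RD})$ through Lemma \ref{lem:decom}, one must verify carefully that its conditional average over $\mathbf{H}(l)$ under the current auction policy $\Pi^l$ reproduces exactly the minimization defining the Bellman operator $\mathbf{T}$ and the reduced equation of Lemma \ref{lem:reduce}. This identification hinges on the decoupling argument used in Problem \ref{prob:dist}, which lets the per-node partitioned actions be averaged independently over $\mathbf{H}$; once this step is carried out, the martingale-averaging argument in the second paragraph is entirely routine.
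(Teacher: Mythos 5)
Your proposal is correct and follows essentially the same route as the paper: the paper disposes of this lemma in one line by appealing to ``the standard approach of stochastic approximation with Martingale noise,'' and your argument is precisely that standard approach written out --- decomposing the update into the conditional-mean drift $\mathbf{q}^l$ plus a martingale-difference noise, bounding its conditional second moment via the a.s.\ boundedness of the per-node value functions, and using square-summable step sizes with Doob's convergence theorem to make the accumulated noise $\sum_{i\ge j}\epsilon_v^i\xi^i$ vanish as $j\to\infty$. Your identification of the $T_I(\mathbf{M}\mathbf{W}^l)\mathbf{e}$ term as the reference-state normalization and of the indicator $\mathbf{I}[\mathbf{Q}(t)=\beta_{m,q}]$ as the row action of $\mathbf{M}^{\dag}$ supplies bookkeeping the paper omits but does not contradict it.
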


The proof of above lemma follows the standard approach of stochastic
approximation with Martingale noise \cite{Borkar:08}. Moreover, the
following lemma is about the limit of sequence $\{\mathbf{q}_m^l\}$.

\begin{Lemma} Suppose the following two inequalities are true for $l=a,a+1,...,a+b$
\begin{align}
\overline{\mathbf{g}}(\Pi^{l}) + \mathbf{P}(\Pi^{l}) \mathbf{M}
\mathbf{W}^l \leq &
\overline{\mathbf{g}}(\Pi^{l-1}) + \mathbf{P}(\Pi^{l-1}) \mathbf{M} \mathbf{W}^l \label{eqn:l}\\
\overline{\mathbf{g}}(\Pi^{l-1}) + \mathbf{P}(\Pi^{l-1}) \mathbf{M}
\mathbf{W}^{l-1} \leq & \overline{\mathbf{g}}(\Pi^{l}) +
\mathbf{P}(\Pi^{l}) \mathbf{M} \mathbf{W}^{l-1} \label{eqn:l-1},
\end{align}
then we have
\begin{equation}
|q^{a+b}_i|\leq C_1 \prod_{i=0}^{\lfloor \frac{b}{\beta} \rfloor -
1}(1-\tau^{a+i \beta}) \quad \forall i, \label{eqn:con_cov}
\end{equation}
where $q_i^{a+b}$ denotes the $i$th element of the vector
$\mathbf{q}^{a+b}$, $C_1$ is some constant.\label{lem:iter}
\end{Lemma}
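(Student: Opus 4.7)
The plan is to convert the policy-switch inequalities \refbrk{eqn:l}--\refbrk{eqn:l-1} into a componentwise sandwich of the form $\mathbf{B}^{l}\mathbf{q}^{l}\preceq\mathbf{q}^{l+1}\preceq\mathbf{A}^{l}\mathbf{q}^{l}$ (modulo a shift parallel to $\mathbf{e}$), and then iterate this sandwich in blocks of $\beta$ steps so that the uniform mixing hypothesis \refbrk{eqn:con_matrix} of Lemma~\ref{lem:conv-I} supplies a per-block multiplicative contraction factor $(1-\tau^{\cdot})$. Throughout, I would exploit two structural facts that are already available: $q^{l}_{I}=0$ for every $l$ (because the reference state $\mathbf{Q}^{I}$ is chosen so that $\widetilde V_m(0)=0$ in $\mathbf{W}$), and the sequence $\{\mathbf{W}^{l}\}$ is almost surely bounded (as argued at the start of the proof of Lemma~\ref{lem:conv-I}), so that $\|\mathbf{q}^{a}\|_{\infty}$ can be absorbed into the constant $C_{1}$.

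Concretely, substitute $\mathbf{W}^{l+1}=\mathbf{W}^{l}+\epsilon_{v}^{l}\mathbf{q}^{l}$ (Lemma~\ref{lem:noise}) into the definition of $\mathbf{q}^{l+1}$ and rearrange to isolate a policy-change residual $\Delta^{l}=[\overline{\mathbf{g}}(\Pi^{l+1})+\mathbf{P}(\Pi^{l+1})\mathbf{M}\mathbf{W}^{l}]-[\overline{\mathbf{g}}(\Pi^{l})+\mathbf{P}(\Pi^{l})\mathbf{M}\mathbf{W}^{l}]$ together with a scalar drift $[T_{I}(\mathbf{M}\mathbf{W}^{l})-T_{I}(\mathbf{M}\mathbf{W}^{l+1})]\mathbf{e}$ parallel to $\mathbf{e}$. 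Applying hypothesis \refbrk{eqn:l} at index $l{+}1$ yields $\Delta^{l}\preceq\mathbf{0}$; applying \refbrk{eqn:l-1} at index $l{+}1$ yields $\Delta^{l}\succeq\epsilon_{v}^{l}(\mathbf{P}(\Pi^{l+1})-\mathbf{P}(\Pi^{l}))\mathbf{M}\mathbf{q}^{l}$. Pre-multiplying by $\mathbf{M}^{-1}$ and collecting terms reproduces exactly the operators $\mathbf{A}^{l}=(1-\epsilon_{v}^{l})\mathbf{I}+\epsilon_{v}^{l}\mathbf{M}^{-1}\mathbf{P}(\Pi^{l+1})\mathbf{M}$ and $\mathbf{B}^{l}=(1-\epsilon_{v}^{l})\mathbf{I}+\epsilon_{v}^{l}\mathbf{M}^{-1}\mathbf{P}(\Pi^{l})\mathbf{M}$ of Lemma~\ref{lem:conv-I}; the $\mathbf{e}$-parallel drift, being constant in the coordinate index, is immaterial once we pass to the span seminorm. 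Iterating the sandwich for $\beta$ consecutive steps then produces matrix products whose $(a,I)$-entries are bounded below by $\tau^{l+\beta}$ by \refbrk{eqn:con_matrix}; combined with $q^{l}_{I}=0$, a standard Doeblin-type argument gives $\|\mathbf{q}^{l+\beta}\|_{\infty}\leq(1-\tau^{l+\beta})\|\mathbf{q}^{l}\|_{\infty}$. Composing $\lfloor b/\beta\rfloor$ such blocks starting at $l=a$, and folding the residual $b\bmod\beta$ steps together with the almost-sure bound on $\mathbf{q}^{a}$ into a finite prefactor $C_{1}$, delivers \refbrk{eqn:con_cov}.

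The main obstacle will be the nonlinear scalar drift $T_{I}(\mathbf{M}\mathbf{W}^{l+1})-T_{I}(\mathbf{M}\mathbf{W}^{l})$, which depends on the $\min$-type Bellman operator $T_{I}$ and therefore cannot be cancelled entry-by-entry. The resolution is to carry the entire argument in the span seminorm (equivalently, in the quotient space modulo $\mathrm{span}(\mathbf{e})$), in which every $\mathbf{e}$-parallel perturbation vanishes and the row-stochastic products $\mathbf{A}^{l+\beta-1}\cdots\mathbf{A}^{l}$ and $\mathbf{B}^{l+\beta-1}\cdots\mathbf{B}^{l}$ act as genuine contractions; this is precisely the reason why hypothesis \refbrk{eqn:con_matrix} only needs a lower bound on the $I$-th column of these products rather than a full Doeblin minorization on every row-column pair.
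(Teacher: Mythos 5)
Your proposal is correct and follows essentially the same route as the paper's proof: both use the two hypothesis inequalities together with the update relation $\mathbf{W}^{l}=\mathbf{W}^{l-1}+\epsilon_v^{l-1}\mathbf{q}^{l-1}$ from Lemma~\ref{lem:noise} to sandwich $\mathbf{q}^{l}$ between $\mathbf{A}$- and $\mathbf{B}$-type affine maps of $\mathbf{q}^{l-1}$ modulo an $\mathbf{e}$-parallel shift, and then iterate in blocks of $\beta$ steps using the $(a,I)$-column minorization of \refbrk{eqn:con_matrix} to obtain the per-block factor $(1-\tau^{\cdot})$ via a span-seminorm (Dobrushin) contraction. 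Your explicit use of $q_I^{l}=0$ and the quotient modulo $\mathrm{span}(\mathbf{e})$ merely makes precise what the paper does implicitly through its $\max\mathbf{q}-\min\mathbf{q}$ manipulation and the observation $\mathbf{A}^{l-1}\mathbf{e}=\mathbf{B}^{l-1}\mathbf{e}$.
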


\begin{proof}
\textit{ From \refbrk{eqn:l} and \refbrk{eqn:l-1}, we have
\begin{equation}
\mathbf{q}^l =  \mathbf{M}^{\dag} \big[
\overline{\mathbf{g}}(\Pi^{l}) + \mathbf{P}(\Pi^{l})\mathbf{M}
\mathbf{W}^l - \mathbf{M} \mathbf{W}^l - w_l \mathbf{e}\big] \leq
\mathbf{M}^{\dag} \big[ \overline{\mathbf{g}}(\Pi^{l-1}) +
\mathbf{P}(\Pi^{l-1}) \mathbf{M} \mathbf{W}^l - \mathbf{M}
\mathbf{W}^l - w_l \mathbf{e}\big] \nonumber
\end{equation}
\begin{align}
\mathbf{q}^{l-1} =&  \mathbf{M}^{\dag} \big[
\overline{\mathbf{g}}(\Pi^{l-1}) + \mathbf{P}(\Pi^{l-1}) \mathbf{M}
\mathbf{W}^{l-1} - \mathbf{M} \mathbf{W}^{l-1} - w_{l-1} \mathbf{e}\big] \nonumber \\
\leq&  \mathbf{M}^{\dag} \big[ \overline{\mathbf{g}}(\Pi^{l}) +
\mathbf{P}(\Pi^{l}) \mathbf{M} \mathbf{W}^{l-1} - \mathbf{M}
\mathbf{W}^{l-1} - w_{l-1} \mathbf{e}\big] \nonumber
\end{align}
where $w_l = T_{I}(\mathbf{M} \mathbf{W}^l)=T_{I}(\mathbf{M}
\mathbf{W}^l)$. According to Lemma \ref{lem:noise}, we have
$\mathbf{W}^l = \mathbf{W}^{l-1} + \epsilon_{v}^{l-1}
\mathbf{q}^{l-1} \Rightarrow \mathbf{W}^l = \mathbf{W}^{l-1} +
\epsilon_{v}^{l-1} \mathbf{q}^{l-1}$.
Therefore,
\begin{align}
\mathbf{q}^l \leq& \big[(1-\epsilon_v^{l-1})\mathbf{I} +
\mathbf{M}^{\dag} \mathbf{P}(\Pi^{l-1}) \mathbf{M} \epsilon_v^{l-1}
\big] \mathbf{q}^{l-1} + w_{l-1} \mathbf{e} - w_l \mathbf{e}
  = \mathbf{B}^{l-1} \mathbf{q}^{l-1} + w_{l-1} \mathbf{e} - w_l \mathbf{e} \nonumber\\
\mathbf{q}^l \geq & \big[(1-\epsilon_v^{l-1})\mathbf{I} +
\mathbf{M}^{\dag} \mathbf{P}(\Pi^{l}) \mathbf{M} \epsilon_v^{l-1}
\big] \mathbf{q}^{l-1} + w_{l-1} \mathbf{e} - w_l \mathbf{e} =
\mathbf{A}^{l-1} \mathbf{q}^{l-1} + w_{l-1} \mathbf{e} - w_l
\mathbf{e}.\nonumber
\end{align}
Notice that $\mathbf{A}^{l-1}\mathbf{e}=\mathbf{B}^{l-1}\mathbf{e}$,
we have $\mathbf{A}^{l-1}...\mathbf{A}^{l-\beta}
\mathbf{q}^{l-\beta} - C_1 \mathbf{e} \leq \mathbf{q}^l \leq
\mathbf{B}^{l-1}...\mathbf{B}^{l-\beta} \mathbf{q}^{l-\beta} - C_1
\mathbf{e}$
\begin{align}
\Rightarrow& (1-\tau^l) [\min \mathbf{q}^{l-\beta}]\leq \mathbf{q}^l
+ C_1 \mathbf{e} \leq (1-\tau^l) [\max \mathbf{q}^{l-\beta}]
\Rightarrow\begin{cases}
    \max \mathbf{q}^l + C_1 \leq (1-\tau^l) \max \mathbf{q}^{l-\beta}  \nonumber\\
    \min \mathbf{q}^l + C_1 \geq (1-\tau^l) \min \mathbf{q}^{l-\beta}\end{cases} \nonumber\\
\Rightarrow& \max \mathbf{q}^l - \min \mathbf{q}^l \leq (1-\tau^l) [
\max \mathbf{q}^{l-\beta} - \min \mathbf{q}^{l-\beta}] \Rightarrow
|q^l_i| \leq \max \mathbf{q}^l - \min \mathbf{q}^l \leq C_2
(1-\tau^l) \quad \forall i \nonumber,
\end{align}
where the first step is due to conditions of Lemma \ref{lem:conv-I}
on matrix sequence $\{\mathbf{A}^l\}$ and $\{\mathbf{B}^l\}$, $\max
\mathbf{q}^l$ and $\min \mathbf{q}^l$ denote the maximum and minimum
elements in $\mathbf{q}^l$ respectively, $C_1$ and $C_2$ are all
constants, the first inequality of the last step is because $\min
\mathbf{q}^l \leq 0$. This completes the proof of Lemma
\ref{lem:iter}.}
\end{proof}

Therefore, the proof of Lemma \ref{lem:conv-I} can be divided into
the following steps: (1) From the property of sequence
$\{\epsilon_v^l\}$, we have $\prod_{i=0}^{\lfloor \frac{l}{\beta}
\rfloor - 1}(1-\epsilon_v^{i \beta}) \rightarrow 0$ ($l \rightarrow
\infty$). (2) According to the first step, note that
$\tau^l=\mathcal{O}(\epsilon_v^l)$, from \refbrk{eqn:con_cov}, we
have $\mathbf{q}^l \rightarrow 0$ ($l \rightarrow \infty$). (3)
Therefore, the update on $\{\mathbf{W}^l\}$ will converge, and the
fixed point of the convergence $\mathbf{W}^{\infty}$ satisfies
$T_{I}(\mathbf{M} \mathbf{W}^l)\mathbf{e} + \mathbf{W}^{\infty} =
\mathbf{M}^{\dag} \mathbf{T}(\mathbf{M}\mathbf{W}^{\infty})$.


\section*{Appendix E: Proof of Lemma \ref{lem:conv-II}}

Due to the page limit, we only provide the sketch of the proof. The
convergence proof of the LMs
$\{\gamma_{S,p},\gamma_{1,p},...,\gamma_{M,p}\}$ for a given
$\gamma_{S,d}$ is as follows:
\begin{itemize}
\item For the notation convenience, we first define the average transmit power of each node as
follows: $\widetilde{\mathcal{P}}_S (\gamma) = \mathbf{E}^{\Pi}
\Big[ \sum_{m=1}^M \sum_{i=1}^{\min(N_T,N_R)} \eta_{S,m}^i p_{S,m}^i
\Big] \quad \mbox{and} \quad \widetilde{\mathcal{P}}_m (\gamma)=
\mathbf{E}^{\Pi} \Big[ \sum_{m=1}^M \sum_{i=1}^{\min(N_T,N_R)}
\eta_{m,D}^i p_{m,D}^i \Big]$ ($m=1,2,...,M$),
where $\mathbf{E}^{\Pi}[\cdot]$ denotes the expectation w.r.t. the
policy $\Pi(\gamma)$. Using standard stochastic approximation
theory, the dynamics of the LMs update equation
$\{\gamma_{S,p},\gamma_{1,p},...,\gamma_{M,p}\}$ can be represented
by the following ODE:
\begin{equation}
\big[\dot{\gamma}_{S,p}(t),...,\dot{\gamma}_{M,p}(t) \big]^T=
\big[\widetilde{\mathcal{P}}_S(\gamma) - P_S,
\widetilde{\mathcal{P}}_1(\gamma) - P_R, ...,
\widetilde{\mathcal{P}}_M(\gamma) - P_R \big]^T \label{eqn:ODE}.
\end{equation}

\item Using perturbation analysis in \cite{CaoXiRen:07}, we have $\frac{\partial \widetilde{\mathcal{P}}_m(\gamma)}{\partial
\gamma_{m,p}} < 0$ ($m=S,1,2,...,M$) and $\Big| \frac{\partial
\widetilde{\mathcal{P}}_m(\gamma)}{\partial \gamma_{m,p}} \Big| >>
\Big| \frac{\partial \widetilde{\mathcal{P}}_m(\gamma)}{\partial
\gamma_{n,p}} \Big|$ ($m=S,1,2,...,M , n\neq m$).
Thus, the update of $\gamma_{m,p}$ ($m=S,1,...,M$) in ODE
(\ref{eqn:ODE}) will drive $\widetilde{\mathcal{P}}_m-P_R$ (or
$\widetilde{\mathcal{P}}_S-P_S$) to 0 whenever
$\widetilde{\mathcal{P}}_m-P_R$ (or $\widetilde{\mathcal{P}}_S-P_S$)
is non-zero. Therefore, the ODE (\ref{eqn:ODE}) will converge. The
converged LMs
$\{\gamma_{S,p}^*(\gamma_{S,d}),\gamma_{1,p}^*(\gamma_{S,d}),...,\gamma_{M,p}^*(\gamma_{S,d})\}$
can be characterized by the equilibrium point of the ODE
(\ref{eqn:ODE}), which is given by the RHS of \refbrk{eqn:ODE}
$\rightarrow 0$.
\end{itemize}
Suppose for a given $\gamma_{S,d}$,
$\{\gamma_{S,p},\gamma_{1,p},...,\gamma_{M,p}\}$ converge to
$\{\gamma_{S,p}^*(\gamma_{S,d}),\gamma_{1,p}^*(\gamma_{S,d}),...,\gamma_{M,p}^*(\gamma_{S,d})\}$.
Since $\frac{\partial
\big(\mathbf{E}^{\Pi}_{\gamma_{1,p}^*,...,\gamma_{M,p}^*,\gamma_{S,p}^*}[Q_S=N_Q]\big)}{\partial
\gamma_{S,d}} < 0$,
the update on $\gamma_{S,d}$ will converge as well for a similar
reason as in the convergence of
$\{\gamma_{S,p},\gamma_{1,p},...,\gamma_{M,p}\}$. Similarly, the
converged $\gamma_{S,d}^*$ can be characterized by the equilibrium
point of the ODE
$\dot{\gamma}_{S,d}(t)=\mathbf{E}^{\Pi}_{\gamma_{1,p}^*,...,\gamma_{M,p}^*,\gamma_{S,p}^*}[Q_S=N_Q]
- D$, which is given by the RHS $\rightarrow 0$.

\section*{Appendix F: Proof of Theorem \ref{thm:optimal}}

Without loss of generality, we shall consider the approximate value
function $V(\mathbf{Q}) = \sum_{m=S}^M \sum_{q=1}^{N_Q}
\widetilde{V}_m(q)$ $ \mathbf{I}[Q_m=q]$
on the following redefined set of representative states
$\mathcal{Q}_R = \{\delta_{m,q}|m=S,1,2,...,M;
q=0,1,...,q_I-1,q_I+1,...,N_Q\}$, where the state $\delta_{m,q}$ is
given by $\delta_{m,q} = [Q_S=q_I, Q_1=q_I, ..., Q_m = q, ...,
Q_M=q_I]^T$
and $q_I < N_Q$ is sufficiently large. Correspondingly,
$\mathbf{M}^{-1}$ should also be redefined such that the per-node
value function $\{\widetilde{\mathbf{V}}_m\}$ is updated on the
representative states $\mathcal{Q}_R$ \cite{Tsitsiklis:96}.

First of all, following the similar approach in the proof of Lemma
\ref{lem:conv-I}, the per-node value function (under the new
reference states) would also converge almost surely to
$\{\widetilde{\mathbf{V}}_m^{\infty}(\gamma)\}$ for any given LMs
$\gamma$.

Next, when the conditions of Theorem \ref{thm:optimal} are
satisfied, given any $\epsilon>0$, there is one integer
$Q_0(\epsilon)$ such that for all $q>Q_0(\epsilon)$ and
$q_I=Q_0(\epsilon)$, we have (from the proof of Lemma
\ref{lem:decom}):
\begin{equation}
\widetilde{V}_m^{\infty}(q-r) - \widetilde{V}_m^{\infty}(q) =
\widetilde{V}_m^{\infty}(q_I-r) - \widetilde{V}_m^{\infty}(q_I) +
\mathcal{O}(\epsilon)\label{eqn:app-linear-1}.
\end{equation}
Moreover, since $\{\widetilde{V}_m^{\infty}(q)\}$ are all
monotonically increasing functions with respect to $q$ and
$\{\widetilde{V}_m^{\infty}(N_Q)\}$ are all
bounded\footnote{$\widetilde{V}_m^{\infty}(q)$ measures the
contribution of cost $q+\gamma_{m,p}\sum_k p_{m,D,k}$ if the system
starts at $Q=q$. For finite $N_Q$, $\widetilde{V}^{\infty}(q)$ is
always bounded. On the other hand, since the system is stable, the
queue length is bounded with probability 1 for arbitrarily large
$N_Q$ and hence, $\widetilde{V}_m^{\infty}(N_Q)$ must be bounded
almost surely for arbitrarily large $N_Q$.}, we have
$\widetilde{V}_m\Big(Q_0(\epsilon)\Big)=\mathcal{O}(\epsilon)$ for
sufficiently large arrivals. Therefore, \refbrk{eqn:app-linear-1}
holds for all $q\in[0,N_Q]$ for sufficiently large $N_Q$ and input
arrivals. Similarly, we have
\begin{equation}
\widetilde{V}_S^{\infty}(q+n-r) - \widetilde{V}_S^{\infty}(q+n) =
\widetilde{V}_S^{\infty}(q_I+n-r) - \widetilde{V}_S^{\infty}(q_I+n)
+ \mathcal{O}(\epsilon)\label{eqn:app-linear-2}
\end{equation}
\begin{equation}
\widetilde{V}_m^{\infty}(q+r) - \widetilde{V}_m^{\infty}(q) =
\widetilde{V}_m^{\infty}(q_I+r) - \widetilde{V}_m^{\infty}(q_I) +
\mathcal{O}(\epsilon).\label{eqn:app-linear-3}
\end{equation}

Hence, with the above equations and substituting the converged
per-node value function
$\{\widetilde{\mathbf{V}}_m^{\infty}(\gamma)\}$ into
(\ref{eqn:bellman}) for the reference states, we get
\begin{align}
\widetilde{V}_S^{\infty}(q)=& q + \gamma_{S,d}\mathbf{I}[q=N_Q] +
\sum_n f_X(n) \Big( \widetilde{V}_S^{\infty}(q+n) -
\widetilde{V}_S^{\infty}(n)\Big) + \min_{\Pi}
\mathbf{E}_{\mathbf{H}}\Big\{\sum_{m,N_{SR}} \eta_{S,m}^{N_{SR}}
\Big[ \gamma_{S,p} \sum_k
p_{S,m}^{N_{SR}}\nonumber\\
& + \sum_n f_X(n) \Big(
\widetilde{V}_S^{\infty}(q+n-r_{S,m}^{N_{SR}}) -
\widetilde{V}_S^{\infty}(q+n) \Big)  +
\widetilde{V}_m^{\infty}(q_I+r_{S,m}^{N_{SR}}) -
\widetilde{V}_m^{\infty}(q_I) \Big] \Big\} \label{eqn:vs-app}\\
\widetilde{V}_m^{\infty}(q) = & q + \widetilde{V}_m^{\infty}(q) +
\min_{u} \mathbf{E}_{\mathbf{H}} \Big\{ \sum_{N_{RD}}
\eta_{m,D}^{N_{RD}} \Big[ \gamma_{m,p} \sum_{k} p_{m,D}^{N_{RD}} +
\widetilde{V}_m^{\infty}(q-r_{m,D}^{N_{RD}}) -
\widetilde{V}_m^{\infty}(q) \Big] \Big\}, \label{eqn:vm-app}
\end{align}
where $m=1,2,...,M$.

Finally, for any system state $\mathbf{Q}^i=[Q_S^i,...,Q_M^i]^T$,
substitute the above equations into the RHS of the original Bellman
equation in (\ref{eqn:bellman}), we get RHS of (\ref{eqn:bellman})
$\mathrel{\mathop=\limits^{\rm a}}\sum_{m=S}^M Q_m^i +
\gamma_{S,d}\mathbf{I}[Q_S^i=N_Q] +
 \sum_{n}f_{X}(n) \widetilde{V}_S^{\infty}(Q_S^i+n) + \sum_{m=1}^M
\widetilde{V}_m^{\infty}(Q_m^i) +
\min_{\Pi(\mathbf{Q}^i)}\mathbf{E}_{\mathbf{H}}\Big\{
\sum_{m,N_{SR}} \eta^{N_{SR}}_{S,m}\Big[ \gamma_{S,p}
p_{S,m}^{N_{SR}} + \sum_{n}f_{X}(n)
  \Big(\widetilde{V}^{\infty}_S(Q_S^i+n-r_{S,m}^{N_{SR}})-\widetilde{V}^{\infty}_S(Q_S^i+n)\Big) + \widetilde{V}^{\infty}_m(q_I+r_{S,m}^{N_{SR}})
-\widetilde{V}^{\infty}_m(q_I)\Big]+ \sum_{m,N_{RD}}
\eta^{N_{RD}}_{m,D}\Big[\gamma_{m,p} p_{m,D}^{N_{RD}} +
\widetilde{V}^{\infty}_m(Q_m^i-r_{m,D}^{N_{RD}}) -
\widetilde{V}^{\infty}_m(Q_m^i) \Big] \Big\} + \mathcal{O}(\epsilon)
 \mathrel{\mathop=\limits^{\rm b}}  \sum_{m=S}^M \widetilde{V}_m^{\infty}(Q_m^i) + \sum_{n} f_I(n)
\widetilde{V}_S^{\infty}(n) + \mathcal{O}(\epsilon) =
V(\mathbf{Q}^i) + \sum_{n} f_I(n) \widetilde{V}_S^{\infty}(n) +
\mathcal{O}(\epsilon)$,
where equality (a) is due to (\ref{eqn:app-linear-3}), equality (b)
is due to (\ref{eqn:vs-app}) and (\ref{eqn:vm-app}). Since $\sum_{n}
f_X(n) \widetilde{V}_S^{\infty}(n)$ is a constant independent of
$\mathbf{Q}^i$ and $\epsilon$ is chosen arbitrarily, we have shown
that the approximate value function $V(\mathbf{Q}) = \sum_{m=S}^M
\sum_{q=1}^{N_Q} \widetilde{V}^{\infty}_m(q) \mathbf{I}[Q_m=q]$ can
satisfy the original Bellman equation \refbrk{eqn:bellman}
asymptotically (when $N_Q \rightarrow +\infty$). As a result, the
proposed distributive update algorithm converges to the global
optimal solution and this completes the proof.

\bibliographystyle{IEEEtran}
\bibliography{IEEEfull,ray,led,yan,yanchen_bibfile}

\end{document}